\definecolor{couleurA}{HTML}{00A08A}
\definecolor{couleurB}{HTML}{F2AD00}
\definecolor{couleurzero}{HTML}{5BBCD6}
\definecolor{couleurun}{HTML}{FF0000}
\definecolor{couleurwith}{HTML}{046C9A}
\definecolor{couleurwithout}{HTML}{C93312}
\newcommand{\code}[1]{\text{\texttt{#1}}}
\newcommand{\indep}{\perp \!\!\! \perp}
\begin{document}
\title{Mitigating Discrimination in Insurance\\with Wasserstein Barycenters}
\titlerunning{Mitigating Discrimination in Insurance with Wasserstein Barycenters}

\author{Arthur Charpentier\inst{1*} \and
Fran\c{c}ois Hu\inst{2} \and
Philipp Ratz\inst{1}}

\authorrunning{Charpentier, Hu \& Ratz}

\institute{Universit\'e du Qu\'ebec \`a Montr\'eal \and
Universit\'e de Montr\'eal\\
${}^{*}$\email{charpentier.arthur@uqam.ca}}%
\maketitle              
\begin{abstract}
The insurance industry is heavily reliant on predictions of risks based on characteristics of potential customers. Although the use of said models is common, researchers have long pointed out that such practices perpetuate discrimination based on sensitive features such as gender or race. Given that such discrimination can often be attributed to historical data biases, an elimination or at least mitigation is desirable. With the shift from more traditional models to machine-learning based predictions, calls for greater mitigation have grown anew, as simply excluding sensitive variables in the pricing process can be shown to be ineffective. In this article, we first investigate why predictions are a necessity within the industry and why correcting biases is not as straightforward as simply identifying a sensitive variable. We then propose to ease the biases through the use of Wasserstein barycenters instead of simple scaling. To demonstrate the effects and effectiveness of the approach we employ it on real data and discuss its implications.
\keywords{Demographic Parity \and Discrimination \and Fairness  \and Insurance \and Wasserstein barycenter.}
\end{abstract}

\section{Introduction and motivation}

\subsection{Insurance and discrimination, an ill-posed problem}

\cite{avraham2017discrimination} explained in one short paragraph the dilemma of considering the problem of discrimination in insurance. 
``{\em What is unique about insurance is that even statistical discrimination which by definition is absent of any malicious intentions, poses significant moral and legal challenges. Why? Because on the one hand, policy makers would like insurers to treat their insureds equally, without discriminating based on race, gender, age, or other characteristics, even if it makes statistical sense to discriminate} (...) {\em On the other hand, at the core of
insurance business lies discrimination between risky and non-risky insureds. But riskiness often statistically correlates with the same characteristics policy makers would like to prohibit insurers from taking into account.}" 
To illustrate this problem, and highlight why writing about discrimination and insurance can be complicated, consider the example of ``{redlining}". Redlining has been an important issue in the credit and insurance industry in the U.S., which started in the 30's. In 1935, the Federal Home Loan Bank Board (FHLBB) looked at more than 200 cities and created ``{\em residential security maps}" to indicate the level of security for real-estate investments in each surveyed city. On the maps (see Figure \ref{fig:redlining} with a collection of fictitious maps), the newest areas—those considered desirable for lending purposes—were outlined in green and known as ``Type A". ``Type D" neighborhoods were outlined in red and considered the most risky for mortgage support (on the left of Figure \ref{fig:redlining}). Such ``Type D" neighborhoods indeed presented a high proportion of dilapidated (or dis-repaired) buildings (as we can observe on the right of Figure \ref{fig:redlining}). In the 70's, when looking at census data, sociologist noticed that red area, where insurers did not want to offer coverage, were also those with a high proportion of Black people, and following the work John McKnight and Andrew Gordon, ``{redlining}" received more interest. In the right pane of Figure \ref{fig:redlining}, the proportion of Black inhabitants is depicted, which roughly coincides with the redlined areas illustrated in the left pane. Thus, on the hand, it could be seen as ``legitimate" to have a premium for household that could somehow reflect the general conditions of houses. On the other hand, it would be discriminatory to have a premium that is function of the ethnic origin of the policyholder. The neighborhood, the ``unsanitary index" and the proportion of Black people are here strongly correlated variables. Of course, this does not preclude non-Black people living in dilapidated houses outside of the red area, Black people living in wealthy houses inside the red area, etc. When working with aggregated data, it is difficult to disentangle information about sanitary conditions and racial information, to distinguish ``legitimate" and ``non-legitimate" discrimination, as discussed in  \cite{hellman2008discrimination} and 
\cite{fairLB}. 
\begin{figure}[!h]
\includegraphics[width=.32\textwidth]{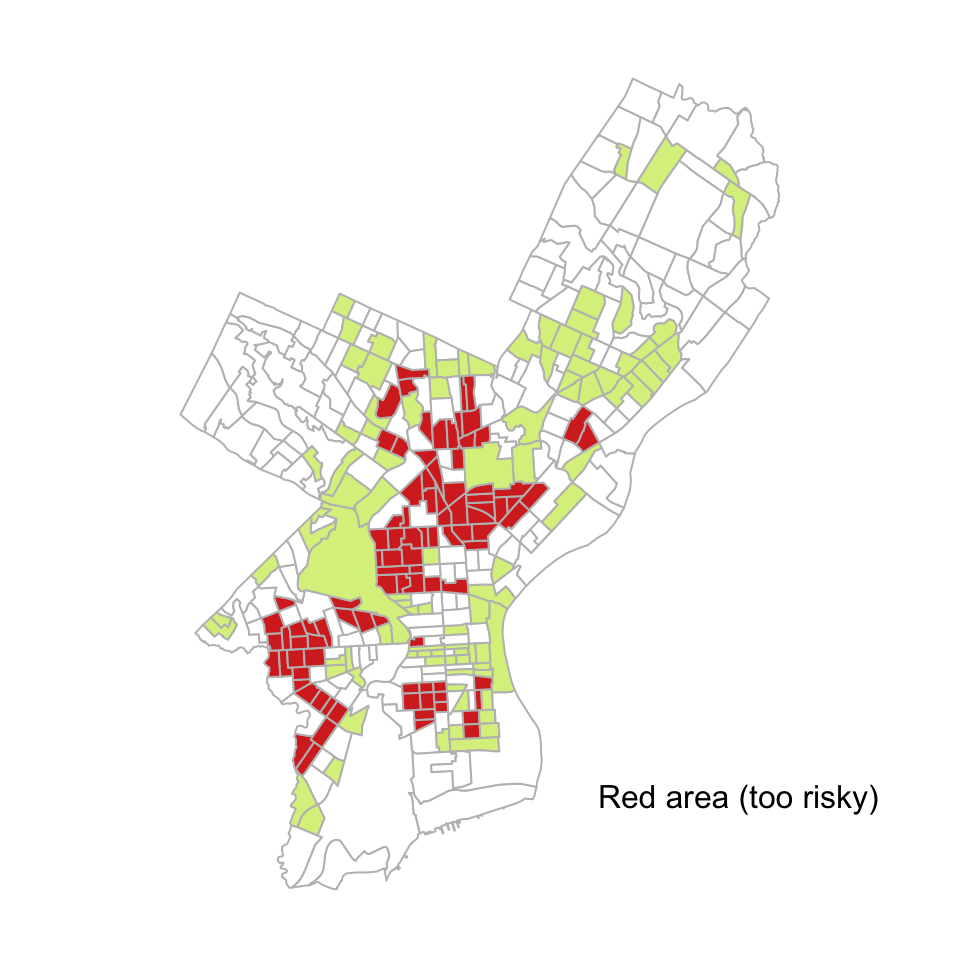}~\includegraphics[width=.32\textwidth]{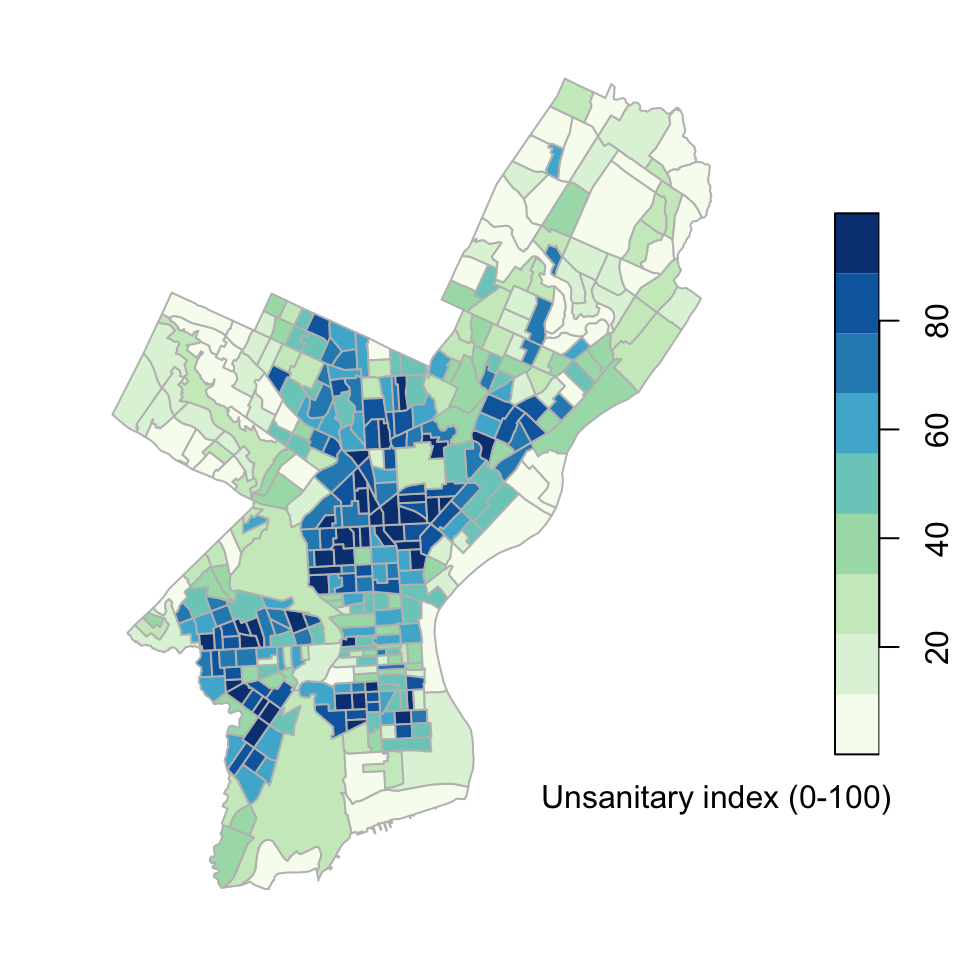}~\includegraphics[width=.32\textwidth]{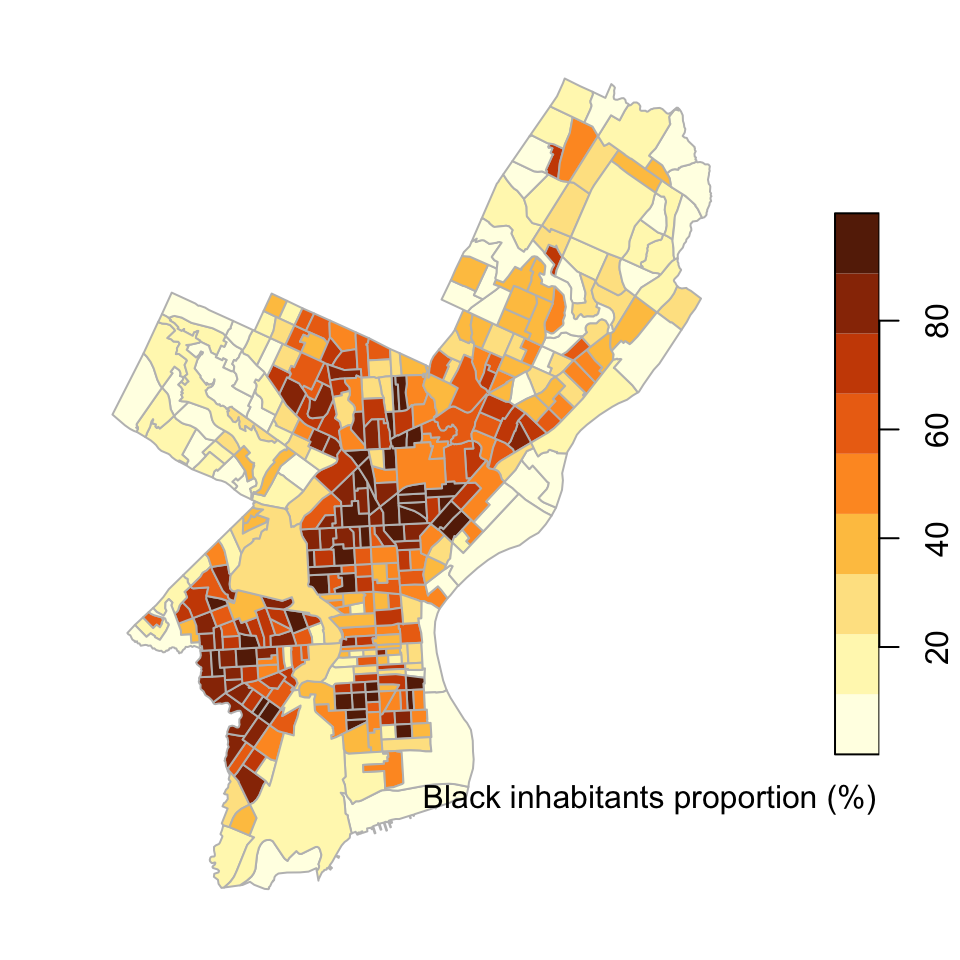} 
\caption{Fictitious maps, (freely) inspired by a Home Owners’ Loan Corporation map from 1937, where red is used to identify neighborhoods in which investment and lending were discouraged, on the left (see \cite{Crossney2016} and \cite{Rhynhart2020}). In the middle, some risk related variable (an fictitious ``unsanitary index") per neighborhood of the city is presented, and on the right, a sensitive variable (the proportion of Black people in the neighborhood, again, freely created).}\label{fig:redlining}
\end{figure}

\subsection{Mitigating discrimination}

Mitigating discrimination is usually seen as paradoxical, because in order to avoid discrimination, one must create another discrimination. More precisely, Supreme Court Justice Harry Blackmun stated, in 1978, ``{\em in order to get beyond racism, we must first take account of race. There is no other way. And in order to treat some persons equally, we must treat them differently}." \index{equality}\index{difference}(\cite{knowlton1978regents}, cited in \cite{lippert2020making}). 
More formally, an argument in favor of affirmative action -- called ``{\em the present- oriented anti-discrimination argument~}" -- is simply that justice requires that we eliminate or at least mitigate (present) discrimination by the best morally permissible means of doing so, which corresponds to affirmative action.
But there are also arguments against affirmative action, corresponding to ``{\em the reverse discrimination objection,}" as defined in \cite{goldman1979justice}: some might consider that there is an absolute ethical constraint against unfair discrimination (including affirmative action). To quote another Supreme Court Justice, in 2007, John G. Roberts of the US Supreme Court submits: ``{\em The way to stop discrimination on the basis of race is to stop discriminating on the basis of race}" (quoted in \cite{turner2015way} and \cite{sabbagh2007equality}). The arguments against affirmative action are usually based on two theoretical moral claims, according to \cite{pojman1998case}. The first denies that groups have moral status (or at least meaningful status). According to this view, individuals are only responsible for the acts they perform as specific individuals and, as a corollary, we should only compensate individuals for the harms they have specifically suffered. The second asserts that a society should distribute its goods according to merit. 

\subsection{Overview}

Disentangling legitimate and illegitimate discrimination in insurance is a challenging task for actuaries and data scientists but often required by regulation. A popular example is the 2004 EU Goods and Services Directive, \cite{genderdirective}, that requires ``gender-neutral" insurance premiums, which in effects imposes neutral prices across the sensitive variable. To highlight the core of the problem, we will first explain why predictive models are important in insurance by in Section \ref{sec:2:predictive:insurance} and introduce the ``balance property", which is the mathematical translation of the definition of insurance (``{\em the contribution of the many to the misfortune of the few~}"). In Section \ref{sec:3:distance}, we then present distance measures between distributions, with a focus on the Wasserstein distance, and its connections to matching and the construction of counterfactual observations, as in \cite{charpentier2023transport}. Section \ref{sec:4:wassertein:quantify} illustrates why the Wasserstein distance is an appropriate tool to quantify fairness between the scores of different groups. {In line with previous research conducted in \cite{gouic2020projection} and \cite{chzhen2020fair}, s}ection \ref{sec:5:wassertein:mitigate} then introduces the Wasserstein barycenter to enable the creation of a score distribution ``between" groups that also achieves the balance property we seek. Finally, we will illustrate that technique in Section \ref{sec:6:case:study} on real insurance data\footnote{see \href{https://github.com/Bias2023/FairInsurance}{\footnotesize{\sffamily https://github.com/Bias2023/FairInsurance}}.}.


\section{Predictive Models in Insurance}\label{sec:2:predictive:insurance}

The insurance business is characterised by an inverted production cycle. In return for a premium - the amount of which is known when the contract is taken out - the insurer undertakes to cover a risk, the unknown date and amount, according to the definition of ``{actuarial pricing}". In order to do this, the insurer will pool the risks within a mutuality. Insurance's universal secret is therefore the pooling of a large number of insurance contracts within a mutuality, in order to allow compensation to be made between the risks that have been damaged and those for which the insurer has collected premiums without having had to pay out any benefits. To use \citeauthor{chaufton1886assurances}'s \citeyear{chaufton1886assurances} formulation, insurance is the ``{\em compensation of the effects of chance by mutuality organised according to the laws of statistics}". If the use of the expected loss as a premium has been motivated for over a hundred years, it would seem legitimate to use the conditional expected value as a premium principle, for some appropriate risk factors $\boldsymbol{x}$. To formalize this, we first consider the definition of the pure premium:
\begin{definition}[Pure premium (Heterogeneous risks)]\label{def:pure:premium:heterogeneous} Let $Y$ be the non-negative random variable corresponding to the total annual loss associated with a given policy, associated with covariates $\boldsymbol{X}=\boldsymbol{x}$, the pure premium is the regression function $\mu(\boldsymbol{x})=\mathbb{E}[Y|\boldsymbol{X}=\boldsymbol{x}]$.
\end{definition}
By the law of total expectations it can be written,
$$
\mathbb{E}_{Y}[Y] = \mathbb{E}_{\boldsymbol{X}}\big[ \mathbb{E}_{Y|\boldsymbol{X}}[Y|\boldsymbol{X}] \big] = \mathbb{E}_{\boldsymbol{X}}\big[ \mu(\boldsymbol{X}) \big],
$$
which gives rise to a desirable property we want any trained model $m$ to have
\begin{definition}[Balance Property]\label{def:pure:premium:heterogeneous} A model $m$, used to predict the pure premium $\mu$, satisfies the balance property if $\mathbb{E}_{\boldsymbol{X}}[m(\boldsymbol{X})]=\mathbb{E}_{Y}[Y]$.
\end{definition}
which boils down to having predictions that are correct on average. This definition does not impose limits on the statistical discrimination though. On another historical note, a 1909 law from Kansas allows an insurance commissioner to review rates to ensure that they were not ``{\em excessive, inadequate, or unfairly discriminatory with regards to individuals}"\index{discrimination!unfair}\index{unfairly discriminatory}, as mentioned in \cite{powell2020risk}. Since then, the idea of ``{\em unfairly discriminatory}" insurance rates has been discussed in many States. We illustrate this issue through a simple working example. In the simplest actuarial models, the annual loss $Y$ is related to a single random event, with a fixed cost (which is the case in most life insurance contracts). Therefore, the pure premium is a linear function of the score $\mu(\boldsymbol{x})=\mathbb{P}[Y=1|\boldsymbol{X}=\boldsymbol{x}]$, where $Y$ in a binary variable indicating the occurrence of a risk. To best illustrate the fairness issues, we will be regarding the score function $\mu$, with respect to some binary sensitive attribute $s$ taking values in $\{\textcolor{couleurA}{\code{A}},\textcolor{couleurB}{\code{B}}\}$.
In Figure \ref{fig:distribution:score:1}, we visualize the distribution of the probability to claim a loss, with the distribution of $m(\boldsymbol{x},s=\textcolor{couleurA}{\code{A}})$ and $m(\boldsymbol{x},s=\textcolor{couleurB}{\code{B}})$, respectively with a plain logistic regression on the left, a gradient boosting model in the middle, and a random forest on the right. The dataset is from real personal motor insurance, used in \cite{charpentierCAS} (obtained as the aggregation of \code{freMPL1}, \code{freMPL2}, \code{freMPL3} and \code{freMPL4}, while keeping only observations with \code{exposure} exceeding $0.9$, to have more simple models to illustrate fairness issues). 
Across the different estimators, differences in the predictions between the groups as well as differences with respect to the balance property from Definition \ref{def:pure:premium:heterogeneous} are visible. 
\begin{figure}
    \centering
    \includegraphics[width=\textwidth]{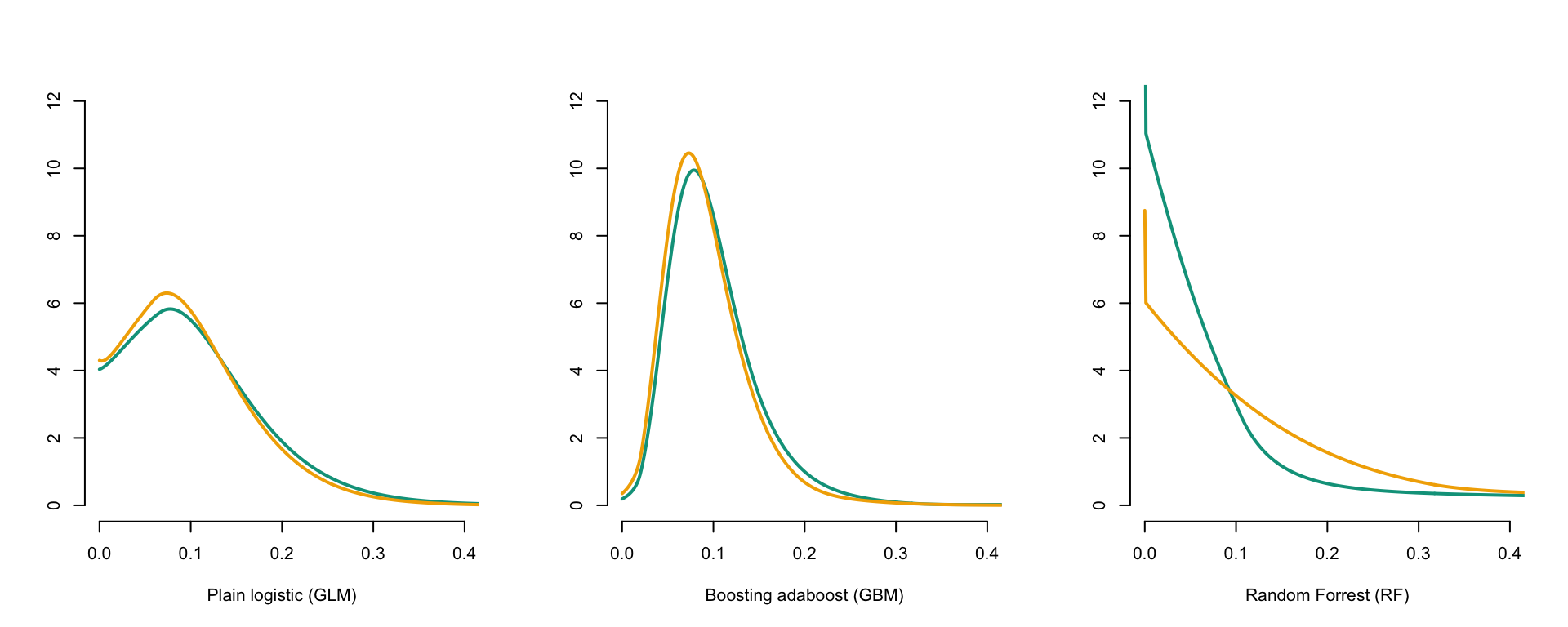}
    \caption{Distributions of $m(\boldsymbol{x},s=\textcolor{couleurA}{\code{A}})$ and $m(\boldsymbol{x},s=\textcolor{couleurB}{\code{B}})$, the probability to claim a loss on a given year, in motor insurance, with three models (GLM, GBM, RF).}
    \label{fig:distribution:score:1}
\end{figure}

\section{Distances Between Distributions}\label{sec:3:distance}

There are several notions to quantify the difference between the group-wise predictions as observed in Figure \ref{fig:distribution:score:1}. For the general case, given two discrete distributions $p$ and $q$, the total variation is the largest possible difference between the probabilities that the two probability distributions can assign to the same event:

\begin{definition}[Total Variation]\cite{camille1881serie,rudin1966real}\label{def:total:variation} For two discrete distributions $p$ and $q$, the total variation distance between $p$ and $q$ is
$$d_{{{\mathrm {TV}}}}(p,q)=\sup_{\mathcal{A}\subset\mathbb{R}}\big\lbrace|p(\mathcal{A})-q(\mathcal{A})|\big\rbrace.
$$
\end{definition}

It should be stressed here that in the context of distributions, \cite{zafar2015fairness} or \cite{zhang2018fairness} suggest to remove the symmetry, to take into account that there is a favored and a disfavored group, and therefore to consider
$$d_{{{\mathrm {TV}}}}(p\|q)=\sup_{\mathcal{A}}\big\lbrace p(\mathcal{A})-q(\mathcal{A})\big\rbrace.
$$
Removing the standard property of symmetry (that we have on distances) yields the concept of "divergence", that is still a non-negative function, positive (in the sense that it is null if and only if "$p=q$", or more precisely ${\displaystyle p\;{\stackrel {a.s.}{=}}\;q}$), and the triangle inequality is not satisfied (even if some satisfy some sort of Pythagorean theorem). As \cite{amari1982differential} explains, it is mainly because divergences are generalizations of "squared distances", not "linear distances".

\begin{definition}[Kullback–Leibler]\cite{kullback1951information}\label{def:KL:divergence} For two discrete distributions $p$ and $q$, Kullback–Leibler divergence of $p$, with respect to $q$ is
$$D_{{{\mathrm {KL}}}}(p\|q)=\sum _{i}p(i)\log {\frac {p(i)}{q(i)}}\!,
$$
and for absolutely continuous distributions,
$$D_{{{\mathrm {KL}}}}(f\|g)=\int _{\mathbb{R}}fpx)\log {\frac {p(x)}{q(x)}}\;\mathrm {d}x\!\text{ or }\int _{\mathbb{R}^d}p(\boldsymbol{x})\log {\frac {p(\boldsymbol{x})}{q(\boldsymbol{x})}}\;d\boldsymbol{x}\!,
$$
in higher dimension.
\end{definition}

Again, this is not a distance (even if it satisfies the nice property ${\displaystyle p\;{\stackrel {a.s.}{=}}\;q}$ if and only if $D_{{{\mathrm {KL}}}}(p\|q)=0$), so we will use the term "divergence" (and notation $D$ instead of $d$).
It is possible to derive a symmetric divergence measure by averaging with the so-called "dual divergence", or to consider the following approach, with "Jensen-Shannon divergence", 
\begin{definition}[Jensen-Shannon]\cite{lin1991divergence}.\label{def:JS:divergence} The Jensen-Shannon distance is a symmetric distance induced by Kullback-Liebler divergence,
$$\displaystyle{D_{{{\mathrm {JS}}}}(p_1,p_2)={\frac {1}{2}}D_{{{\mathrm {KL}}}}(p_1\|q)+{\frac {1}{2}}D_{{{\mathrm {KL}}}}(p_1\|q),}$$
where $\displaystyle{q={\frac {1}{2}}(p_1+p_2)}$. 
\end{definition}

Another popular distance is the Wasserstein distance, also called Mallows' distance, from \cite{mallows1972note},

\begin{definition}[Wasserstein]\label{def:W:divergence}\cite{vaserstein1969markov}.
Consider two measures on $p$ and $q$ on $\mathbb{R}^d$, with a norm $\|\cdot\|$ (on $\mathbb{R}^d$). Then define
$$\displaystyle W_{k}(p ,q )=\left(\inf _{\pi \in \Pi (p,q )}\int _{{\mathbb{R}^d}\times \mathbb{R}^d}\|\boldsymbol{x}-\boldsymbol{y}\|^{k}\mathrm {d} \pi (\boldsymbol{x},\boldsymbol{y})\right)^{1/k},$$
where ${\displaystyle \Pi (p ,q )}$ is the set of all couplings of $p$ and $q$. 
\end{definition}
Throughout this article, unless stated otherwise, we will consider the Wasserstein distance to be the $W_2$ and $d$ the Euclidean distance. As mentioned in \cite{villani2009optimal}, the total variation distance arises quite naturally as the optimal transportation cost, when the cost function is, $\ell_{0/1}$, or $\boldsymbol{1}(x\neq y)$, since
$$
d_{{{\mathrm {TV}}}}(p,q)=\inf_{\pi \in \Pi (p,q )}\big\lbrace\mathbb{P}[X\neq Y],~(X,Y)\sim \pi\big\rbrace=\inf_{\pi \in \Pi (p,q )}\big\lbrace\mathbb{E}[\ell_{0/1}(X, Y)],~(X,Y)\sim \pi\big\rbrace.
$$
With Wasserstein-distance, we consider
$$
\inf_{\pi \in \Pi (p,q )}\big\lbrace\mathbb{E}[\ell(X, Y)],~(X,Y)\sim \pi\big\rbrace\text{ or }
\inf_{\pi \in \Pi (p,q )}\left\lbrace\int \ell(x,y)\pi(dx,dy)\right\rbrace.
$$
The connection with ``transport" is obtained as follows: given $\mathcal{T}:\mathbb{R}^k\rightarrow\mathbb{R}^k$, define the ``{\em push-forward}'' measure, 
$$
\mathbb{P}_1(A)=  \mathcal{T}_{\#}\mathbb{P}_0(A)= \mathbb{P}_0\big(\mathcal{T}^{-1}(A)\big),~\forall A\subset \mathbb{R}^k.
$$
An optimal transport $\mathcal{T}^\star$ (in Brenier's sense,  from \cite{brenier1991polar}, see \cite{villani2009optimal} or \cite{galichon2016optimal}) from $\mathbb{P}_0$ towards $\mathbb{P}_1$ will be solution of 
$$
\mathcal{T}^\star\in \underset{\mathcal{T}:\mathcal{T}_{\#}\mathbb{P}_0=\mathbb{P}_1}{\text{arginf}}\left\lbrace\int_{\mathbb{R}^k} \ell(\boldsymbol{x},\mathcal{T}(\boldsymbol{x}))d\mathbb{P}_0(\boldsymbol{x})\right\rbrace,
$$

In dimension 1 (distributions on $\mathbb{R}$), let $F_0$ and $F_1$ denote the cumulative distribution function, and $F_0^{-1}$ and $F_1^{-1}$ denote quantiles\index{quantile}. Then
$${\displaystyle W_{k}(p_0,p_1)=\left(\int _{0}^{1}\left|F_{0}^{-1}(u)-F_{1}^{-1}(u)\right|^{k}\,\mathrm {d} u\right)^{1/k}},$$
and one can prove that the optimal transport $\mathcal{T}^\star$ is a monotone transformation. More precisely,
$$
\mathcal{T}^\star:x_0\mapsto x_1 = F_1^{-1}\circ F_0(x_0).
$$
For empirical measures, in dimension 1, the distance is a simple function of the order statistics:
$${\displaystyle W_{k}(\boldsymbol{x},\boldsymbol{y})=\left({\frac {1}{n}}\sum _{i=1}^{n}|x_{(i)}-y_{(i)}|^{k}\right)^{1/k}}.$$

Observe that, for two Gaussian distributions, and the Euclidean distance,
$${\displaystyle W_{2}(p _{0},p _{1})^{2}=(\mu_{1}-\mu_{0})^{2}+ {\bigl(\sigma_1-\sigma_0\bigr)^2},}$$
and in higher dimension,
$${\displaystyle W_{2}(p _{0},p _{1})^{2}=\|\boldsymbol\mu_{1}-\boldsymbol\mu_{0}\|_{2}^{2}+\operatorname {tr} {\bigl (}\boldsymbol\Sigma_{0}+\boldsymbol\Sigma_{1}-2{\bigl (}\boldsymbol\Sigma_{1}^{1/2}\boldsymbol\Sigma_{0}\boldsymbol\Sigma_{1}^{1/2}{\bigr )}^{1/2}{\bigr )}.}$$

If variances are equal, we can write simply
$$
\begin{cases}
W_{2} (p_0, p_1)^2 = \| \boldsymbol\mu_1 - \boldsymbol\mu_0 \|_2^2 =(\boldsymbol\mu_1 - \boldsymbol\mu_0)^\top (\boldsymbol\mu_1 - \boldsymbol\mu_0)\\
D_\text{KL} (p_0\|p_1) = (\boldsymbol\mu_1 - \boldsymbol\mu_0)^\top \boldsymbol\Sigma_1^{-1}(\boldsymbol\mu_1 - \boldsymbol\mu_0)
\end{cases}
$$
And in that Gaussian case, there is an explicit expression for the optimal transport, which is simply an affine map (see \cite{villani2003optimal} for more details). In the univariate case, $x_1 = \mathcal{T}^\star_{\mathcal{N}}(x_0) = \mu_1+ \displaystyle{\frac{\sigma_1}{\sigma_0}(x_0-\mu_0)}$, while in the multivariate case, an analogous expression can be derived:$$
\boldsymbol{x}_1 = \mathcal{T}^\star_{\mathcal{N}}(\boldsymbol{x}_0)=\boldsymbol{\mu}_1 + \boldsymbol{A}(\boldsymbol{x}_0-\boldsymbol{\mu}_0),
$$
where $\boldsymbol{A}$ is a symmetric positive matrix that satisfies $\boldsymbol{A}\boldsymbol{\Sigma}_0\boldsymbol{A}=\boldsymbol{\Sigma}_1$, which has a unique solution given by $\boldsymbol{A}=\boldsymbol{\Sigma}_0^{-1/2}\big(\boldsymbol{\Sigma}_0^{1/2}\boldsymbol{\Sigma}_1\boldsymbol{\Sigma}_0^{1/2}\big)^{1/2}\boldsymbol{\Sigma}_0^{-1/2}$, where $\boldsymbol{M}^{1/2}$ is the square root of the square (symmetric) positive matrix $\boldsymbol{M}$ based on the Schur decomposition ($\boldsymbol{M}^{1/2}$ is a positive symmetric matrix), as described in \cite{higham2008functions}.

\section{Wasserstein Distance to Quantify Discrimination}\label{sec:4:wassertein:quantify}
The definition of fairness is somewhat more complicated than simply using a distance metric. As pointed out by \cite{caton2020fairness}, there are at least a dozen ways to define (formally) the fairness of a classifier, or more generally of a model. For example, one can wish for independence between the score and the group membership, $m(\boldsymbol{Z})\indep S$, or between the prediction (as a class) and the protected variable $\widehat{Y}\indep S$.  

\begin{definition}[Independence]\cite{barocas2017fairness} A model $m$ satisfies the independence property if $m(\boldsymbol{X},S)\indep S$, with respect to the distribution $\mathbb{P}$ of the triplet $(\boldsymbol{X},S,Y)$.
\end{definition}\label{def:independence}

From this property, we can define the concept of ``{\em demographic parity}" (also called ``{\em statistical fairness}", ``{\em equal parity}", ``{\em equal acceptance rate}" or simply ``{\em independence}", as mentioned in \cite{calders2010three}).

\begin{definition}[Weak Demographic Parity] A model $m$ satisfies weak demographic parity if 
$$
\mathbb{E}[m(\boldsymbol{X},S)|S=\textcolor{couleurA}{\code{A}}] = 
\mathbb{E}[m(\boldsymbol{X},S)|S=\textcolor{couleurB}{\code{B}}]\text{ or } 
\mathbb{E}_{\mathbb{P}_{\textcolor{couleurA}{\code{A}}}}[m(\boldsymbol{X},S)]=
\mathbb{E}_{\mathbb{P}_{\textcolor{couleurB}{\code{B}}}}[m(\boldsymbol{X},S)].
$$
\end{definition}

A stronger condition can be obtained if we ask to have equality of the distributions of scores, instead of the average value. A classical definition is based on the Total Distance (as in Definition \ref{def:total:variation}),

\begin{definition}[Strong Demographic Parity] A decision function $\widehat{y}$ satisfies strong demographic parity if $\widehat{Y}\indep S$, i.e. for all $\mathcal{A}\subset\mathbb{R}$,
$$
\mathbb{P}[\widehat{Y}\in\mathcal{A}|S=\textcolor{couleurA}{\code{A}}] = 
\mathbb{P}[\widehat{Y}\in\mathcal{A}|S=\textcolor{couleurB}{\code{B}}],~\forall \mathcal{A}\subset\mathcal{Y}\text{ or }d_{{{\mathrm {TV}}}}(\mathbb{P}_{\textcolor{couleurA}{\code{A}}},\mathbb{P}_{\textcolor{couleurB}{\code{B}}})=0,
$$
where $\mathbb{P}_{\textcolor{couleurA}{\code{A}}}$ and $\mathbb{P}_{\textcolor{couleurB}{\code{B}}}$ denote the conditional distributions of the score $m(\boldsymbol{X},S)$.
\end{definition}
This notion naturally extends to the Wasserstein distance as

\begin{proposition}A model $m$ satisfies the strong demographic parity property if and only if $W_2(\mathbb{P}_{\textcolor{couleurA}{\code{A}}},\mathbb{P}_{\textcolor{couleurB}{\code{B}}})=0.
$
\end{proposition}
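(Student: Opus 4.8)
The plan is to recognize that this proposition is, at bottom, the assertion that $W_2$ is a genuine metric on the relevant family of laws — in particular that it separates points — combined with the elementary fact that $d_{\mathrm{TV}}(p,q)=0$ holds precisely when $p\stackrel{a.s.}{=}q$. Concretely, by the definition of strong demographic parity $m$ satisfies the property iff $d_{\mathrm{TV}}(\mathbb{P}_{\code{A}},\mathbb{P}_{\code{B}})=0$, iff $\mathbb{P}_{\code{A}}=\mathbb{P}_{\code{B}}$ as distributions on $\mathbb{R}$, so it suffices to establish the equivalence $\mathbb{P}_{\code{A}}=\mathbb{P}_{\code{B}}\iff W_2(\mathbb{P}_{\code{A}},\mathbb{P}_{\code{B}})=0$. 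Before starting I would note that the score $m(\boldsymbol{X},S)=\mathbb{P}[Y=1\mid\boldsymbol{X},S]$ is $[0,1]$-valued, so both conditional laws have finite second moment and $W_2$ is finite and well defined; this dispenses with any integrability subtlety.

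For the direction $\mathbb{P}_{\code{A}}=\mathbb{P}_{\code{B}}\Rightarrow W_2=0$, I would exhibit the coupling concentrated on the diagonal: let $\pi$ be the law of $(U,U)$ with $U\sim\mathbb{P}_{\code{A}}$; since both marginals equal $\mathbb{P}_{\code{A}}=\mathbb{P}_{\code{B}}$, we have $\pi\in\Pi(\mathbb{P}_{\code{A}},\mathbb{P}_{\code{B}})$, and $\int|x-y|^2\,\mathrm{d}\pi(x,y)=0$, so the infimum defining $W_2$ is $0$ (equivalently, using the univariate quantile formula recalled in Section~\ref{sec:3:distance}, equal laws have equal quantile functions and the integral $\int_0^1|F_{\code{A}}^{-1}(u)-F_{\code{B}}^{-1}(u)|^2\,\mathrm{d}u$ vanishes). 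For the converse $W_2=0\Rightarrow\mathbb{P}_{\code{A}}=\mathbb{P}_{\code{B}}$, I would invoke the one-dimensional identity $W_2(\mathbb{P}_{\code{A}},\mathbb{P}_{\code{B}})^2=\int_0^1|F_{\code{A}}^{-1}(u)-F_{\code{B}}^{-1}(u)|^2\,\mathrm{d}u$ from Section~\ref{sec:3:distance}: if this vanishes then $F_{\code{A}}^{-1}=F_{\code{B}}^{-1}$ Lebesgue-almost everywhere on $(0,1)$, and since both quantile functions are nondecreasing and one-sided continuous, almost-everywhere agreement forces pointwise agreement, hence $F_{\code{A}}=F_{\code{B}}$ and the two laws coincide, i.e.\ $d_{\mathrm{TV}}(\mathbb{P}_{\code{A}},\mathbb{P}_{\code{B}})=0$.

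The step carrying the real content is that last implication: ``$F_{\code{A}}^{-1}=F_{\code{B}}^{-1}$ almost everywhere $\Rightarrow$ the distributions are equal''. It rests on the observation that a monotone function is determined by its values on a dense set together with a fixed one-sided continuity convention, so two monotone functions agreeing Lebesgue-a.e. agree at every common point of continuity, and their (at most countably many) jump points are then pinned down by monotonicity; transporting this back through the generalized-inverse correspondence between quantile functions and c.d.f.'s gives $F_{\code{A}}\equiv F_{\code{B}}$. If one prefers a dimension-free route, the same conclusion follows from weak compactness of $\Pi(\mathbb{P}_{\code{A}},\mathbb{P}_{\code{B}})$ (the marginals being tight) and lower semicontinuity of $\pi\mapsto\int|x-y|^2\,\mathrm{d}\pi$: the infimum is attained at some $\pi^\star$ with $\int|x-y|^2\,\mathrm{d}\pi^\star=0$, which forces $x=y$ $\pi^\star$-almost surely, so the two marginals of $\pi^\star$, namely $\mathbb{P}_{\code{A}}$ and $\mathbb{P}_{\code{B}}$, are equal. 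Everything else is bookkeeping around Definitions~\ref{def:total:variation} and~\ref{def:W:divergence}.
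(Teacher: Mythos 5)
Your proof is correct; the paper states this proposition without proof, implicitly treating it as the assertion that $W_2$ is a bona fide metric (so that $W_2=0$ iff the laws coincide) combined with the definition of strong demographic parity via $d_{\mathrm{TV}}=0$. You spell out exactly that chain of equivalences, including the one-dimensional quantile identity and the observation that a.e.\ equality of monotone one-sided-continuous quantile functions forces equality of the laws; the alternative compactness/lower-semicontinuity route you sketch is also sound and dimension-free. In short, your argument supplies the routine details the paper leaves implicit, with no gaps.
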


It is also particularly easy to visualize that property on Figure \ref{fig:match:gender}, with on the $x$-axis, the distribution of the score in group \textcolor{couleurA}{\code{A}}, and on the $y$-axis the distribution of the score in group \textcolor{couleurB}{\code{B}}. The \textcolor{couleurzero}{plain line} is the (monontonic) optimal transport $\mathcal{T}^\star$. If that line is on the diagonal, $m$ is fair (for the ``strong demographic parity" criteria).

\begin{figure}
    \centering
    \includegraphics[width=.32\textwidth]{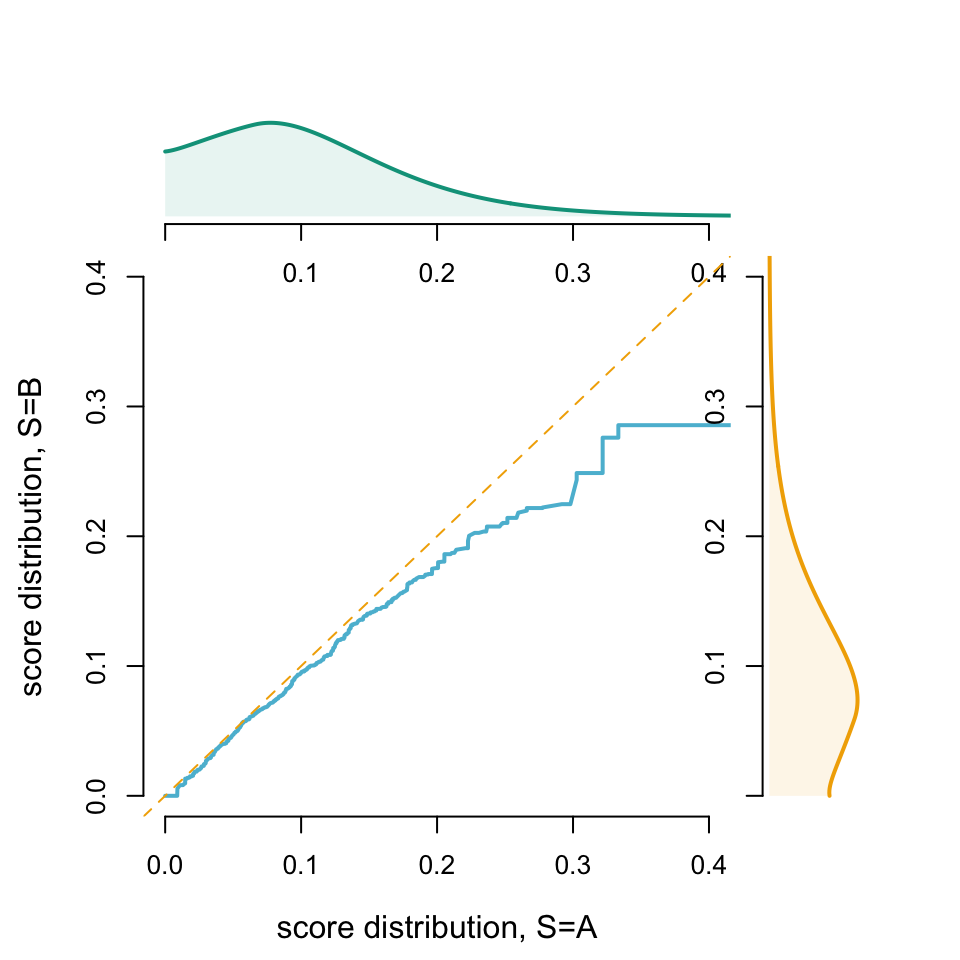}\includegraphics[width=.32\textwidth]{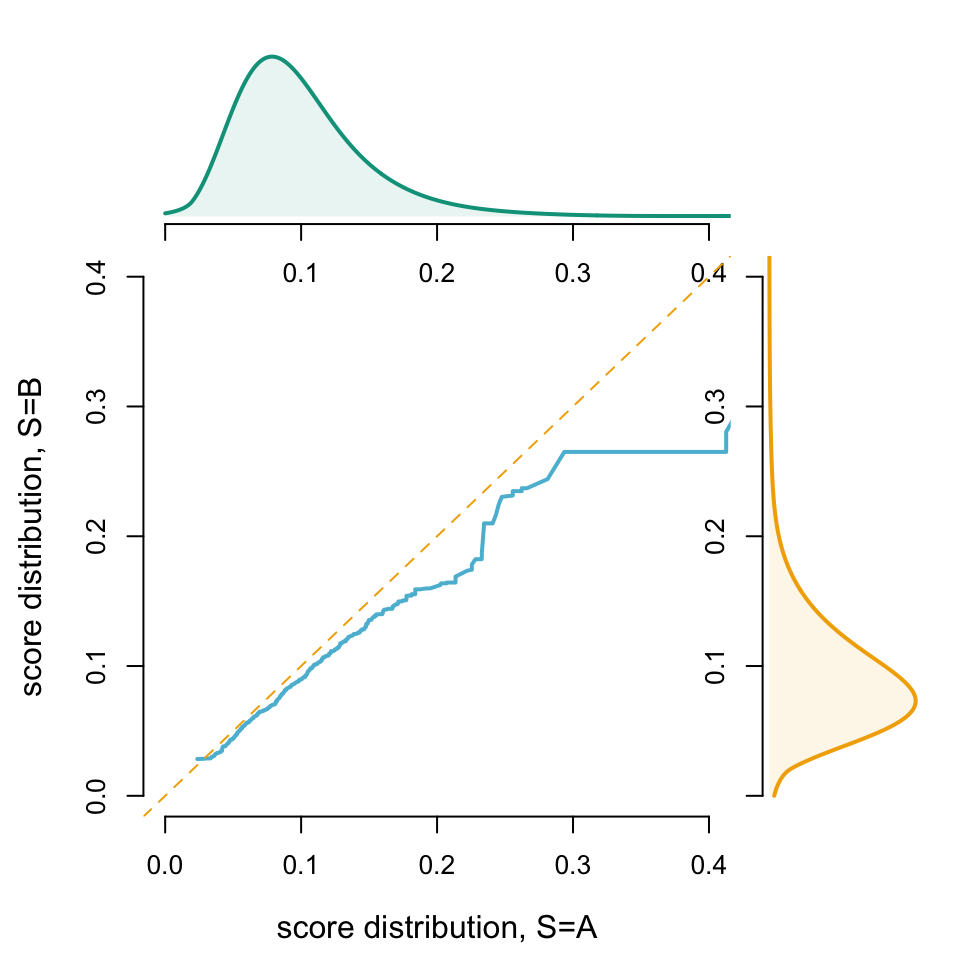}\includegraphics[width=.32\textwidth]{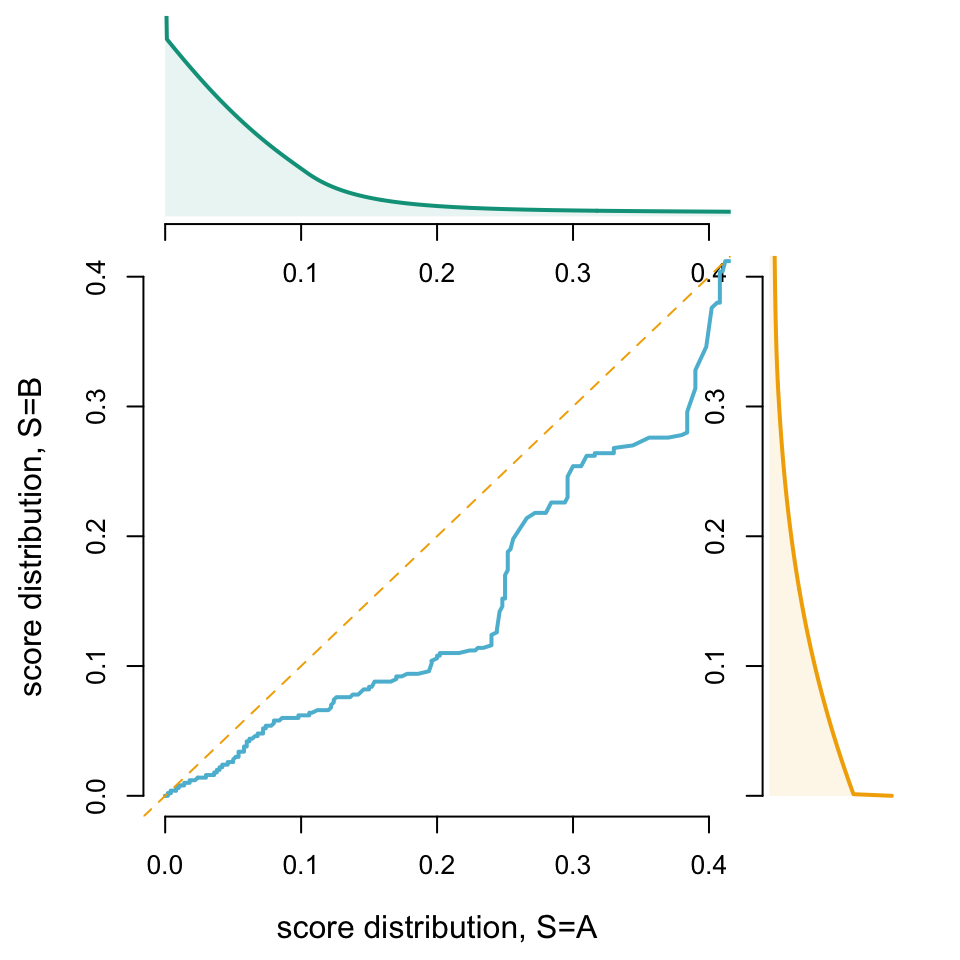}
    \caption{Matching between $m(\boldsymbol{x},s=\textcolor{couleurA}{\code{A}})$ and $m(\boldsymbol{x},s=\textcolor{couleurB}{\code{B}})$, where $m$ is (from the left to the right) GLM, GBM and RF.}
    \label{fig:match:gender}
\end{figure}

\section{Wasserstein Barycenters to Mitigate Discrimination}\label{sec:5:wassertein:mitigate}
Mitigating discrimination can be achieved through several techniques. For example, a simple approach if weak demographic parity is not satisfied, in the sense that $\mathbb{E}_{\mathbb{P}_{\textcolor{couleurA}{\code{A}}}}[m(\boldsymbol{X})]\neq
\mathbb{E}_{\mathbb{P}_{\textcolor{couleurB}{\code{B}}}}[m(\boldsymbol{X})]$ would be to consider
$$
m^\star(\boldsymbol{x},s) = \frac{\mathbb{E}_{\mathbb{P}}[m(\boldsymbol{X},S)]}{\mathbb{E}_{\mathbb{P}_{{{s}}}}[m(\boldsymbol{X},s)]}\cdot m(\boldsymbol{x},s)\text{ for a policyholder in group }s.
$$
As a numerical example from our dataset, overall, a single policyholder has $8.67\%$ chance to claim a loss, $8.94\%$ for a man (group \code{A}) and $8.20\%$ for a woman (group \code{B}). Because of this difference, in order to get a fair model, ``gender-neutral", the premium for a woman should be $8.67/8.20=1.058$ (or $5.8\%$) higher, $m^\star(\boldsymbol{x},s)=1.058\cdot m(\boldsymbol{x},s)$, and $3\%$ lower than the predicted one, for men. This approach is perhaps a bit too simplistic, as it ignores differences between the group distributions. An alternative is to consider the use of a barycenter of distributions, as done for example in  \cite{gouic2020projection, jiang2020wasserstein, chzhen2020fair,charpentier2023ecml}
. Recall that barycenters of $\{\boldsymbol{z}_1,\cdots,\boldsymbol{z}_n\}$, in standard Euclidean spaces, are simply "{\em weighted averages}'', defined as  solution of $$
\boldsymbol{z}^\star=\underset{\boldsymbol{z}}{\text{argmin}}\left\lbrace\sum_{i=1}^n \omega_i d(\boldsymbol{z},\boldsymbol{z}_i)^2 \right\rbrace,
$$
for some weights $\omega_i\geq 0$, and where $d$ is the standard Euclidean distance.  This can be extended to more general spaces, such as measures. We can therefore define some sort of average measure, solution of
$$
\mathbb{P}^\star=\underset{\mathbb{Q}}{\text{argmin}}\left\lbrace\sum_{i=1}^n \omega_i d(\mathbb{Q},\mathbb{P}_i)^2 \right\rbrace,
$$
for some distance (or divergence) $d$, as in \cite{nielsen2011burbea}. Those are also called "centroids" associated with measures $\mathcal{P}=\{\mathbb{P}_1,\cdots,\mathbb{P}_n\}$, and weights $\boldsymbol{\omega}$. For instance, \cite{jeffreys1946invariant} consider the empirical case of "{\em averaging histograms}" (and not theoretical measures $\mathbb{P}_i$), extended in \cite{nielsen2009sided} as the \cite{nielsen2013jeffreys} as "{\em generalized Kullback–Leibler centroid}" (see Definition \ref{def:KL:divergence} for Kullback–Leibler divergence, and the symmetric extension in Definition \ref{def:JS:divergence} base on some "average" measure). 

An alternative (see \cite{agueh2011barycenters} and Definition \ref{def:W:divergence}) is to use the Wasserstein distance $W_2$. As shown in \cite{santambrogio2015optimal}, if one of the measures $\mathbb{P}_i$ is absolutely continuous, the minimization problem has a unique solution. As discussed in Section 5.5.5 in \cite{santambrogio2015optimal}, it is possible to simple a simple version for univariate measures. Given a reference measure, say  $\mathbb{P}_1$, it is possible to write the barycenter as the "{\em average push-forward}" transformation of $\mathbb{P}_1$: if $\mathbb{P}_i = \mathcal{T}^{1\to i}_{\#}\mathbb{P}_1$ (with the convention that $\mathcal{T}^{1\to 1}_{\#}$ is the identity), 
$$
\mathbb{P}^\star = \left(\sum_{i=1}^n \omega_i\mathcal{T}^{1\to i}\right)_{\#}\mathbb{P}_1.
$$
And in the univariate case, $\mathcal{T}^{1\to i}$ is simply a rearrangement, defined as $\mathcal{T}^{1\to i}=F_i ^{-1}\circ F_1$, where $F_i(t)=\mathbb{P}_i((-\infty,t])$ and $F_i^{-1}$ is its generalized inverse. Note that Wasserstein Barycenter is also named "{\em  Fr\'echet mean of distributions}" in \cite{petersen2019frechet}. As discussed in \cite{alvarez2018wide}, moments and risk measures associated with $\mathbb{P}^\star$ can be expressed simply from associated measures on $\mathbb{P}_i$'s and $\boldsymbol{\omega}$. 

\begin{definition}[Fair barycenter score] Given two scores $m(\boldsymbol{x},s=\textcolor{couleurA}{\code{A}})$ and $m(\boldsymbol{x},s=\textcolor{couleurB}{\code{B}})$, the ``fair barycenter score" is
$$
\begin{cases}
  m^\star(\boldsymbol{x},s=\textcolor{couleurA}{\code{A}}) = \mathbb{P}[S=\textcolor{couleurA}{\code{A}}] \cdot m(\boldsymbol{x},s=\textcolor{couleurA}{\code{A}}) +\mathbb{P}[S=\textcolor{couleurB}{\code{B}}] \cdot F_{\textcolor{couleurB}{\code{B}}}^{-1} \circ  F_{\textcolor{couleurA}{\code{A}}}\big(m(\boldsymbol{x},s=\textcolor{couleurA}{\code{A}})\big) \\
   m^\star(\boldsymbol{x},s=\textcolor{couleurB}{\code{B}}) = \mathbb{P}[S=\textcolor{couleurA}{\code{A}}]\cdot F_{\textcolor{couleurA}{\code{A}}}^{-1} \circ  F_{\textcolor{couleurB}{\code{B}}}\big(m(\boldsymbol{x},s=\textcolor{couleurB}{\code{B}})\big) +\mathbb{P}[S=\textcolor{couleurB}{\code{B}}]  \cdot m(\boldsymbol{x},s=\textcolor{couleurB}{\code{B}}).
\end{cases}
$$
\end{definition}

\begin{proposition}
The score $m^\star$ is balanced.
\end{proposition}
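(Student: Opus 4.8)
The plan is to compute $\mathbb{E}_{\boldsymbol{X}}[m^\star(\boldsymbol{X},S)]$ by conditioning on $S$ twice and to show it equals $\mathbb{E}_{\boldsymbol{X}}[m(\boldsymbol{X},S)]$, which is $\mathbb{E}_Y[Y]$ whenever the base model $m$ already satisfies the balance property of Definition~\ref{def:pure:premium:heterogeneous}. Write $\pi_{\textcolor{couleurA}{\code{A}}}=\mathbb{P}[S=\textcolor{couleurA}{\code{A}}]$, $\pi_{\textcolor{couleurB}{\code{B}}}=\mathbb{P}[S=\textcolor{couleurB}{\code{B}}]$ and $\mu_s=\mathbb{E}_{\mathbb{P}_s}[m(\boldsymbol{X},s)]$ for $s\in\{\textcolor{couleurA}{\code{A}},\textcolor{couleurB}{\code{B}}\}$, so that $\mathbb{E}_{\boldsymbol{X}}[m(\boldsymbol{X},S)]=\pi_{\textcolor{couleurA}{\code{A}}}\mu_{\textcolor{couleurA}{\code{A}}}+\pi_{\textcolor{couleurB}{\code{B}}}\mu_{\textcolor{couleurB}{\code{B}}}$ by the law of total expectation.

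The key step is a change-of-variables observation. Conditionally on $S=\textcolor{couleurA}{\code{A}}$, the random variable $m(\boldsymbol{X},\textcolor{couleurA}{\code{A}})$ has cumulative distribution function $F_{\textcolor{couleurA}{\code{A}}}$, hence (when $F_{\textcolor{couleurA}{\code{A}}}$ is continuous) $F_{\textcolor{couleurA}{\code{A}}}(m(\boldsymbol{X},\textcolor{couleurA}{\code{A}}))$ is uniform on $[0,1]$ and $F_{\textcolor{couleurB}{\code{B}}}^{-1}\circ F_{\textcolor{couleurA}{\code{A}}}(m(\boldsymbol{X},\textcolor{couleurA}{\code{A}}))$ has distribution $\mathbb{P}_{\textcolor{couleurB}{\code{B}}}$ — this is precisely the univariate optimal transport $\mathcal{T}^{\textcolor{couleurA}{\code{A}}\to\textcolor{couleurB}{\code{B}}}$ recalled in Section~\ref{sec:5:wassertein:mitigate}. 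Therefore $\mathbb{E}_{\mathbb{P}_{\textcolor{couleurA}{\code{A}}}}\big[F_{\textcolor{couleurB}{\code{B}}}^{-1}\circ F_{\textcolor{couleurA}{\code{A}}}(m(\boldsymbol{X},\textcolor{couleurA}{\code{A}}))\big]=\mu_{\textcolor{couleurB}{\code{B}}}$, and substituting into the definition of $m^\star$ yields $\mathbb{E}_{\mathbb{P}_{\textcolor{couleurA}{\code{A}}}}[m^\star(\boldsymbol{X},\textcolor{couleurA}{\code{A}})]=\pi_{\textcolor{couleurA}{\code{A}}}\mu_{\textcolor{couleurA}{\code{A}}}+\pi_{\textcolor{couleurB}{\code{B}}}\mu_{\textcolor{couleurB}{\code{B}}}$. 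The symmetric computation on group $\textcolor{couleurB}{\code{B}}$, using that $F_{\textcolor{couleurA}{\code{A}}}^{-1}\circ F_{\textcolor{couleurB}{\code{B}}}$ pushes $\mathbb{P}_{\textcolor{couleurB}{\code{B}}}$ onto $\mathbb{P}_{\textcolor{couleurA}{\code{A}}}$, gives the same value $\pi_{\textcolor{couleurA}{\code{A}}}\mu_{\textcolor{couleurA}{\code{A}}}+\pi_{\textcolor{couleurB}{\code{B}}}\mu_{\textcolor{couleurB}{\code{B}}}$ for $\mathbb{E}_{\mathbb{P}_{\textcolor{couleurB}{\code{B}}}}[m^\star(\boldsymbol{X},\textcolor{couleurB}{\code{B}})]$ (incidentally re-proving that $m^\star$ satisfies weak demographic parity). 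Conditioning on $S$ once more, $\mathbb{E}_{\boldsymbol{X}}[m^\star(\boldsymbol{X},S)]=(\pi_{\textcolor{couleurA}{\code{A}}}+\pi_{\textcolor{couleurB}{\code{B}}})(\pi_{\textcolor{couleurA}{\code{A}}}\mu_{\textcolor{couleurA}{\code{A}}}+\pi_{\textcolor{couleurB}{\code{B}}}\mu_{\textcolor{couleurB}{\code{B}}})=\pi_{\textcolor{couleurA}{\code{A}}}\mu_{\textcolor{couleurA}{\code{A}}}+\pi_{\textcolor{couleurB}{\code{B}}}\mu_{\textcolor{couleurB}{\code{B}}}=\mathbb{E}_{\boldsymbol{X}}[m(\boldsymbol{X},S)]=\mathbb{E}_Y[Y]$, so $m^\star$ is balanced.

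The one point that needs care — the main obstacle — is the measure-theoretic handling of the quantile transform: the clean statement ``$F_{\textcolor{couleurB}{\code{B}}}^{-1}\circ F_{\textcolor{couleurA}{\code{A}}}$ maps $\mathbb{P}_{\textcolor{couleurA}{\code{A}}}$ to $\mathbb{P}_{\textcolor{couleurB}{\code{B}}}$ exactly'' requires the score distributions to be atomless (absolutely continuous), which is the same hypothesis under which the Wasserstein barycenter is well defined and unique in Section~\ref{sec:5:wassertein:mitigate}. If atoms were present one would have to argue with generalized inverses and randomized tie-breaking and would only recover equality of means up to a controlled error; but under the standing absolute-continuity assumption this is not an issue, and the remainder of the argument is just the law of total expectation applied twice together with $\pi_{\textcolor{couleurA}{\code{A}}}+\pi_{\textcolor{couleurB}{\code{B}}}=1$.
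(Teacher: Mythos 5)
Your proof is correct and follows the same route as the paper: the paper's one-line proof invokes the law of total expectation together with the observation that the barycenter weights are $\omega_i=\mathbb{P}[S=i]$, and your argument fills in exactly those two ingredients (the push-forward step supplying $\mathbb{E}_{\mathbb{P}_{\textcolor{couleurA}{\code{A}}}}\big[F_{\textcolor{couleurB}{\code{B}}}^{-1}\circ F_{\textcolor{couleurA}{\code{A}}}(m)\big]=\mu_{\textcolor{couleurB}{\code{B}}}$, then two applications of total expectation). Your two side remarks — that balance of $m^\star$ is inherited from, and hence conditional on, balance of $m$, and that the push-forward identity requires atomless score distributions — are both correct and are left implicit in the paper's terse ``trivial'' proof.
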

\begin{proof}
Trivial from the law of total expectation, and since weights are $\omega_i=\mathbb{P}[S=i]$,
\end{proof}


In the case of Gaussian distributions (as in \cite{mallasto2017learning}) $\mathcal{N}(\boldsymbol{\mu}_i,\boldsymbol{\Sigma}_i)$,
Wasserstein barycenter is here
$$\mathcal{N}(\boldsymbol{\mu}^\star,\boldsymbol{\Sigma}^\star),\text{ where }
\boldsymbol{\mu}^\star = \sum_{i=1}^n \omega_i \boldsymbol{\mu}_i,
$$
and where $\boldsymbol{\Sigma}^\star$ is the unique positive definite matrix such that
$$
\boldsymbol{\Sigma}^\star= \sum_{i=1}^n\omega_i \big(\boldsymbol{\Sigma}^{\star1/2}\boldsymbol{\Sigma}_i\boldsymbol{\Sigma}^{\star1/2}\big)^{1/2}.
$$
In Figure \ref{fig:distribution:bary:matching}, inspired from Figure \ref{fig:match:gender}, we can visualize the matching between $m(\boldsymbol{x},s=\textcolor{couleurA}{\code{A}})$ and $m^\star(\boldsymbol{x},s=\textcolor{couleurA}{\code{A}})$ on top, and between $m(\boldsymbol{x},s=\textcolor{couleurB}{\code{B}})$ and $m^\star(\boldsymbol{x},s=\textcolor{couleurB}{\code{B}})$ below.
\begin{figure}[!h]
    \centering
    \includegraphics[width=.32\textwidth]{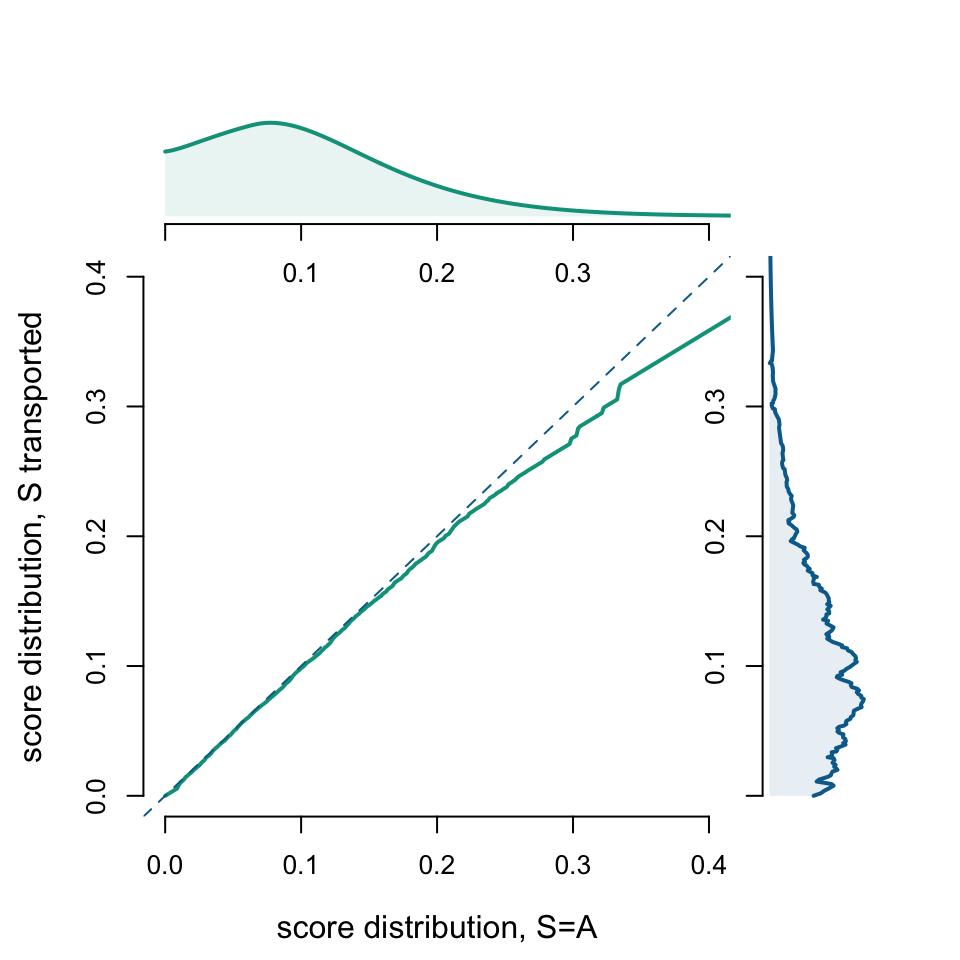} \includegraphics[width=.32\textwidth]{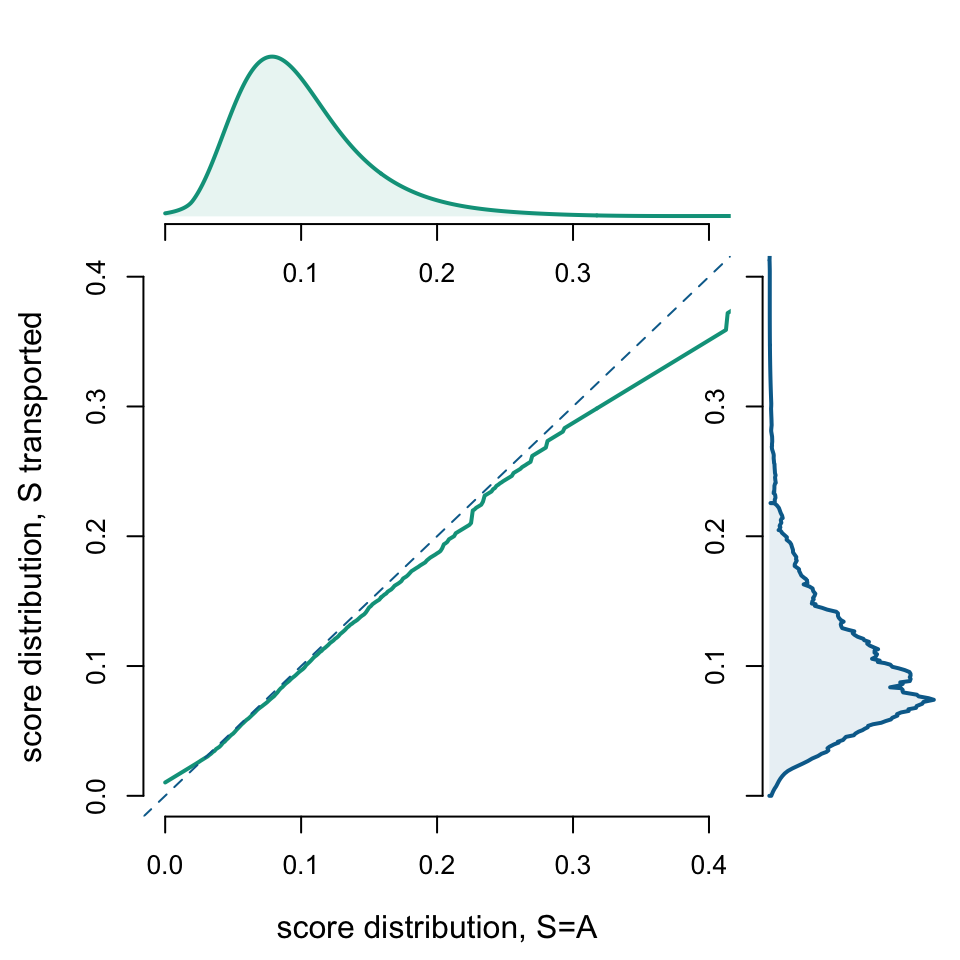} \includegraphics[width=.32\textwidth]{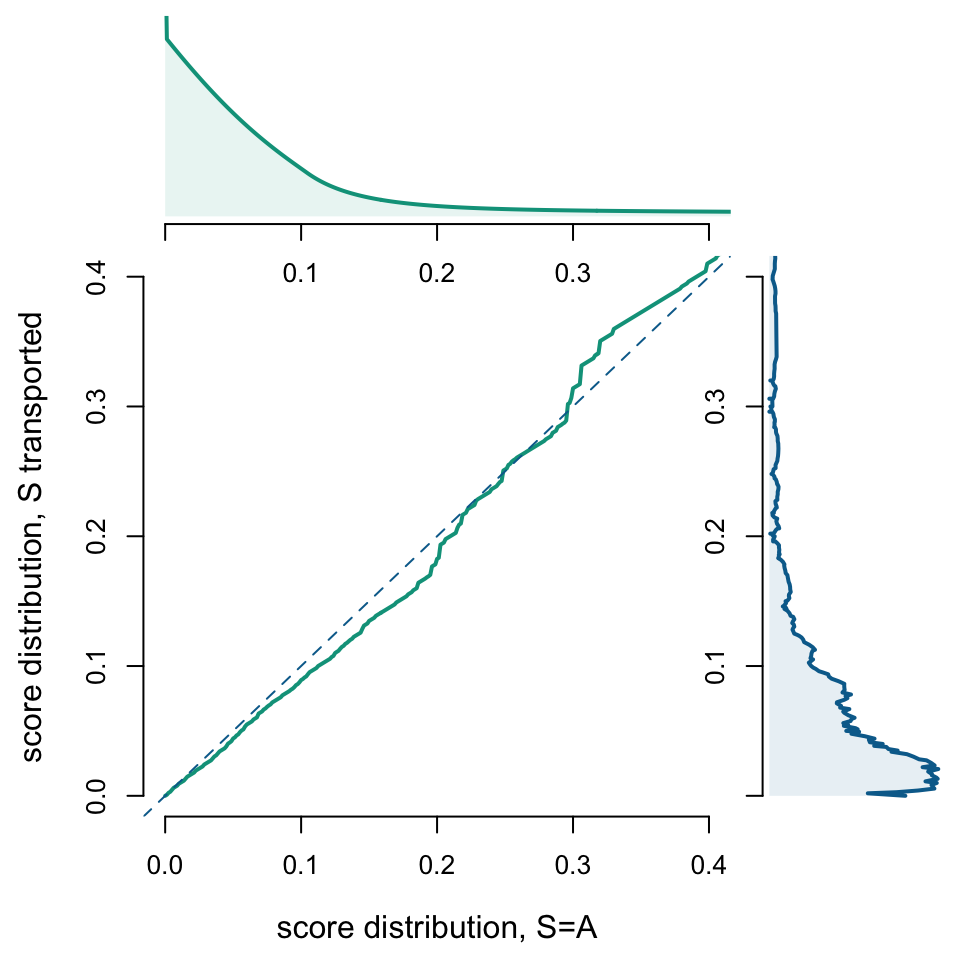}

    \includegraphics[width=.32\textwidth]{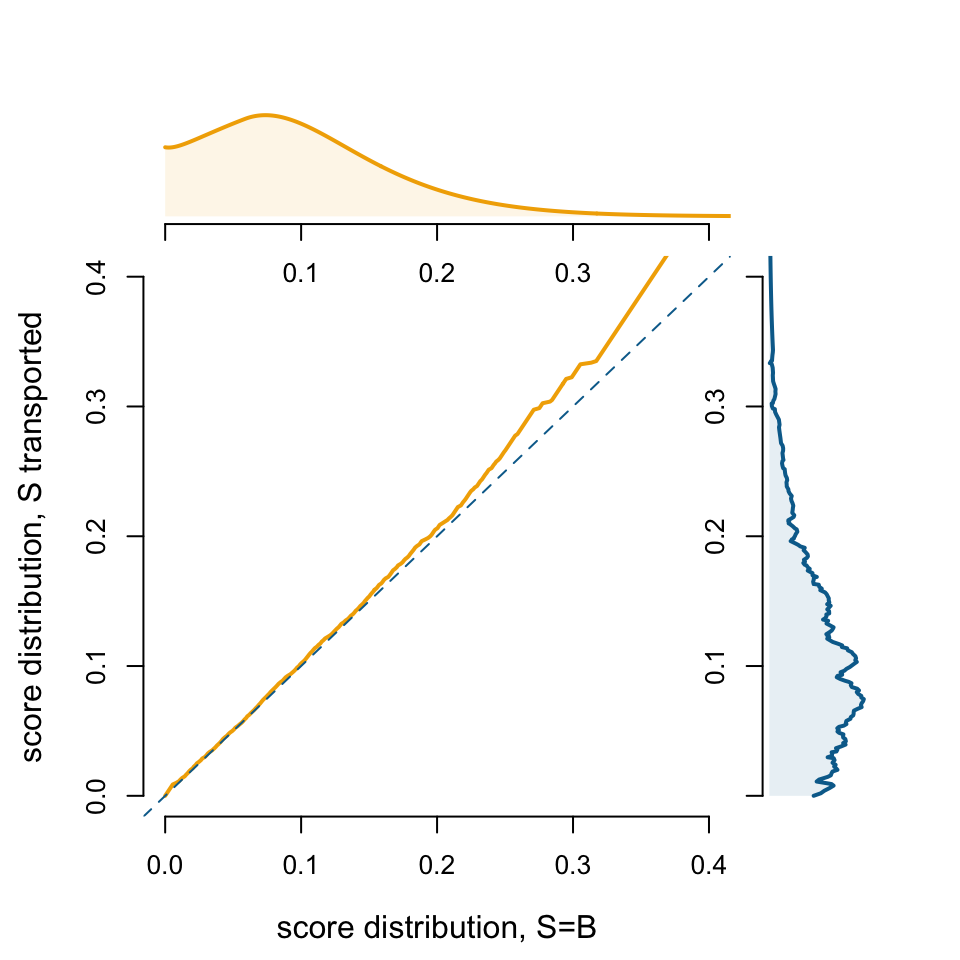} \includegraphics[width=.32\textwidth]{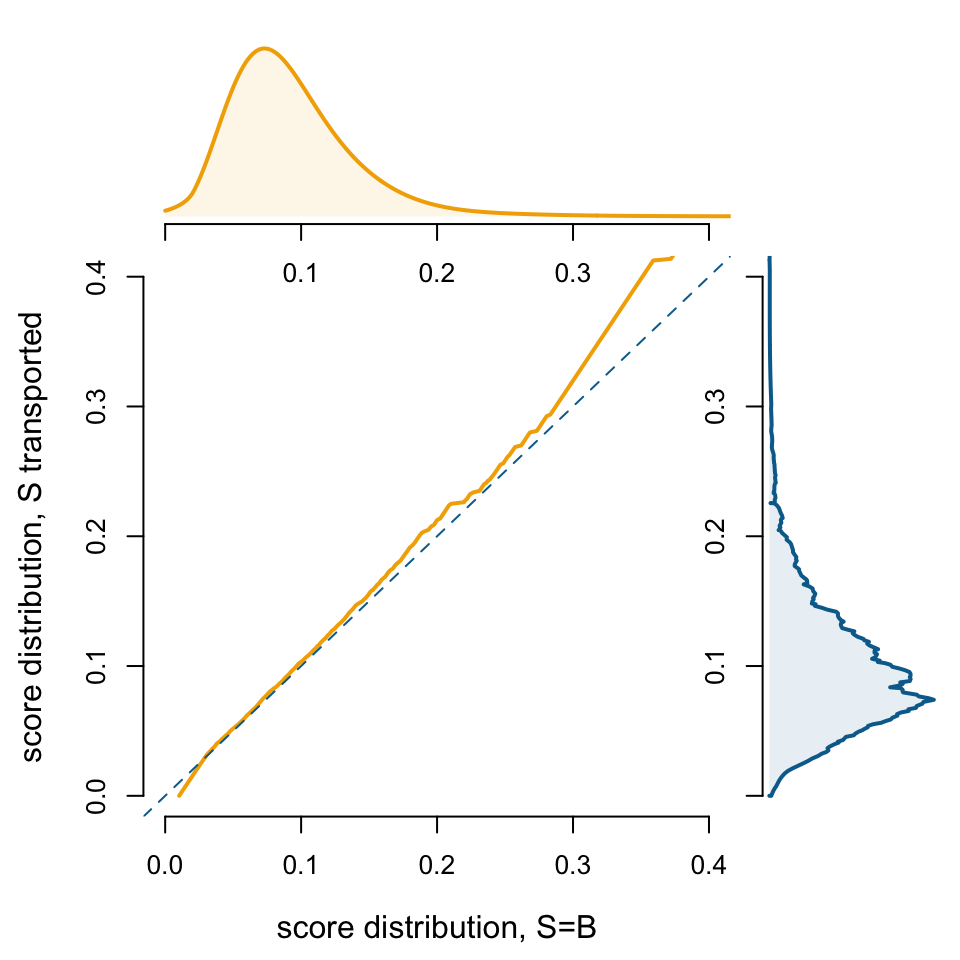} \includegraphics[width=.32\textwidth]{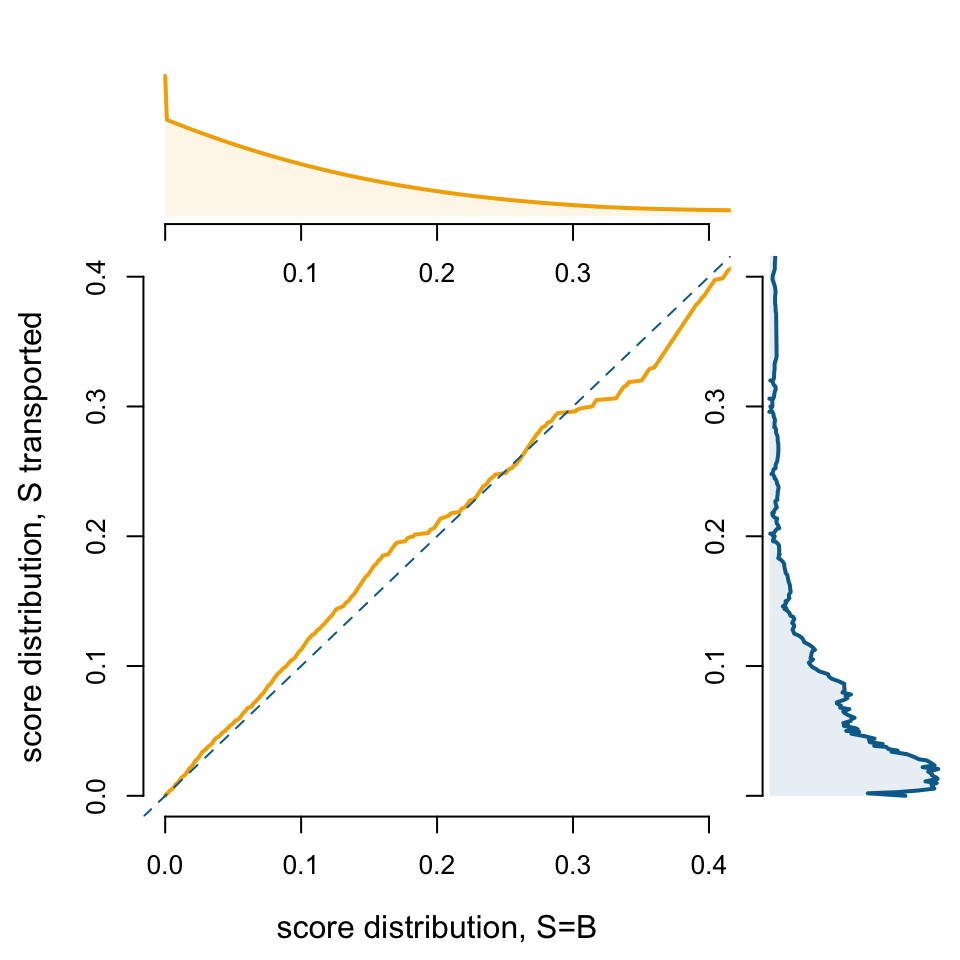}
    \caption{Matching between $m(\boldsymbol{x},s=\textcolor{couleurA}{\code{A}})$ and $m^\star(\boldsymbol{x},s=\textcolor{couleurA}{\code{A}})$, on top, and between $m(\boldsymbol{x},s=\textcolor{couleurB}{\code{B}})$ and $m^\star(\boldsymbol{x},s=\textcolor{couleurB}{\code{B}})$, below, on the probability to claim a loss in motor insurance when $s$ is the gender of the driver.}
    \label{fig:distribution:bary:matching}
\end{figure}
In Figure \ref{fig:distribution:score:scatter}, we have the scatterplot of points $(m(\boldsymbol{x}_i,s_i=\textcolor{couleurA}{\code{A}}),m(^\star\boldsymbol{x}_i))$ and $(m(\boldsymbol{x}_i,s_i=\textcolor{couleurB}{\code{B}}),m(^\star\boldsymbol{x}_i))$.
\begin{figure}[!h]
    \centering
    \includegraphics[width=\textwidth]{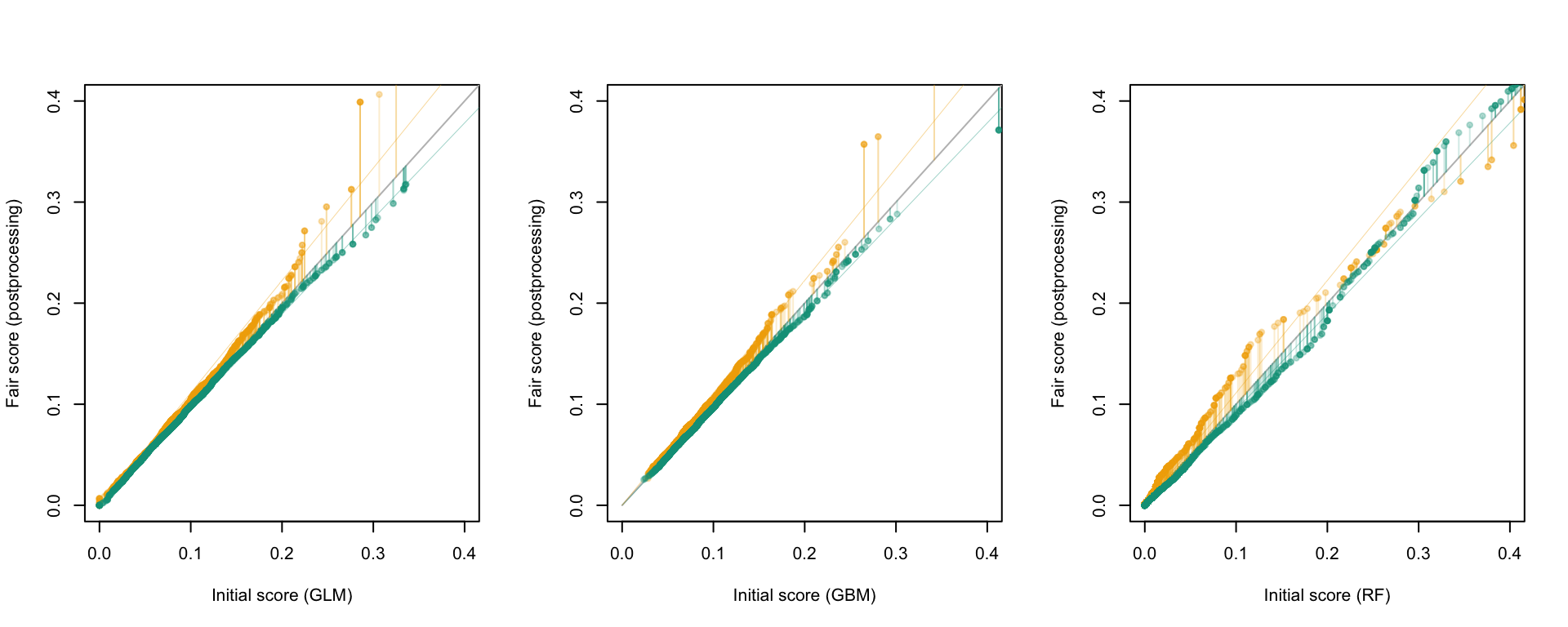}
    \caption{Scatterplot of points $(m(\boldsymbol{x}_i,s_i=\textcolor{couleurA}{\code{A}}),m(^\star\boldsymbol{x}_i),s=\textcolor{couleurA}{\code{A}})$ and $(m(\boldsymbol{x}_i,s_i=\textcolor{couleurB}{\code{B}}),m(^\star\boldsymbol{x}_i),s=\textcolor{couleurB}{\code{B}})$, with three models (GLM, GBM, RF), on the probability to claim a loss in motor insurance when $s$ is the gender of the driver.}
    \label{fig:distribution:score:scatter}
\end{figure}

\section{Motor Insurance Case Study}\label{sec:6:case:study}

{Building upon the preceding study, we emphasize that the dataset employed in this section continues to originate from the \texttt{freMPL} data discussed earlier.}

\subsection{Gender of the Main Driver}

Gender is considered as a sensitive attribute in many places around the world. And as strongly stated in \cite{kearns2019ethical}, ``{\em machine learning} (or any predictive model) {\em won't give you anything like gender neutrality `for free' that you didn't explicitly ask for}." Legal obligations often also require neutrality with respect to gender. For example, the 2004 EU Goods and Services Directive, \cite{genderdirective}, aimed to reduce gender gaps in access to all goods and services, discussed for example by \cite{thiery2006fairness}. 
In the United States, according to \cite{zebra}, it is forbidden to use the gender in 6 States (California, Hawaii, Massachusetts, Montana, North Carolina and Pennsylvania) and in Canada, \cite{ibc2021}.

To further follow our example, in the entire dataset, we have 64\% men (7973) and 36\% women (4464) registered as ``main driver". Overall, if we consider ``weak demographic parity", $8.2\%$ women claim a loss, against $8.9\%$ women. In Table \ref{tab:transport:gender}, we can visualize ``gender-neutral" predictions, derived from the logistic regression (GLM), a boosting algorithm (GBM) and a random forest (RF). The first column corresponds to the proportional approach discussed in Section \ref{sec:5:wassertein:mitigate}.
\begin{table}[!h]
    \centering
    \begin{tabular}{|r|cccc|cccc|}\cline{2-9}
   \multicolumn{1}{c|}{} & \multicolumn{4}
   {c|}{\textcolor{couleurA}{\code{A}} (men)} & \multicolumn{4}
   {c|}{\textcolor{couleurB}{\code{B}} (women)} \\\cline{2-9}
    \multicolumn{1}{c|}{}& $\times 0.94$& GLM & GBM & RF & $\times 1.11$& GLM & GBM & RF \\\hline
        $m(\boldsymbol{x})=5\%$ & \textcolor{couleurA}{4.73\%}& \textcolor{couleurA}{4.94\%}& \textcolor{couleurA}{4.80\%}& \textcolor{couleurA}{4.42\%}& \textcolor{couleurB}{5.56\%}& \textcolor{couleurB}{5.16\%}& \textcolor{couleurB}{5.25\%}& \textcolor{couleurB}{6.15\%} \\
        $m(\boldsymbol{x})=10\%$& \textcolor{couleurA}{9.46\%} & \textcolor{couleurA}{9.83\%} & \textcolor{couleurA}{9.66\%} & \textcolor{couleurA}{8.92\%}& \textcolor{couleurB}{11.12\%} & \textcolor{couleurB}{10.38\%}& \textcolor{couleurB}{10.49\%}& \textcolor{couleurB}{12.80\%}  \\
        $m(\boldsymbol{x})=20\%$& \textcolor{couleurA}{18.91\%} & \textcolor{couleurA}{19.50\%} & \textcolor{couleurA}{18.68\%} & \textcolor{couleurA}{18.26\%}& \textcolor{couleurB}{22.25\%} & \textcolor{couleurB}{20.77\%}& \textcolor{couleurB}{21.63\%}& \textcolor{couleurB}{21.12\%}\\\hline
    \end{tabular}
    \caption{``Gender-free" prediction if the initial prediction was 5\% (on top), 10\% (in the middle) and 20\% (below). The first approach is the simple ``benchmark" based on $\mathbb{P}[Y=1]/\mathbb{P}[Y=1|S=s]$, and then three models are considered, GLM, GBM and RF.}
    \label{tab:transport:gender}
\end{table}
In Figures \ref{fig:distribution:bary:matching} and \ref{fig:distribution:score:scatter}, we have seen how to get a ``fair prediction", with the matching between $m(\boldsymbol{x},s=\textcolor{couleurA}{\code{A}})$ and $m^\star(\boldsymbol{x},s=\textcolor{couleurA}{\code{A}})$, on top, and between $m(\boldsymbol{x},s=\textcolor{couleurB}{\code{B}})$ and $m^\star(\boldsymbol{x},s=\textcolor{couleurB}{\code{B}})$, on Figure \ref{fig:distribution:bary:matching}, and with scatterplot of points $(m(\boldsymbol{x}_i,s_i=\textcolor{couleurA}{\code{A}}),m^\star(\boldsymbol{x}_i,s=\textcolor{couleurA}{\code{A}}))$ and $(m(\boldsymbol{x}_i,s_i=\textcolor{couleurB}{\code{B}}),m^\star(\boldsymbol{x}_i,s=\textcolor{couleurB}{\code{B}}))$ on Figure \ref{fig:distribution:score:scatter}.

\subsection{Age of the Main Driver}

Age is more complex variable. In insurance, age is usually considered ``{\em less discriminatory}" than gender, as we have seen, because as \cite{macnicol2006age} observes, age is not a club in which one enters at birth, and it will change with time. 
Age also seen as legitimate since it is is strongly correlated with inexperience, lack of skill, and risk-taking behaviors have been associated with the collisions of young drivers, \cite{rolison2018factors}. Though its use is not without discussion. For example, in Labrador (Canada), age cannot be used
before 55, and beyond that, it must be a discount (as in North Carolina, U.S.).

To illustrate the effect of non-discriminative predictions, we consider a binary sensitive attribute, related to the age, with $s=\boldsymbol{1}(\code{age}>65)$ (discrimination against old people), in Table \ref{tab:transport:old} and $s=\boldsymbol{1}(\code{age}<30)$ (discrimination against young people), in Table \ref{tab:transport:young}.

\begin{table}[!h]
    \centering
    \begin{tabular}{|r|cccc|cccc|}\cline{2-9}
   \multicolumn{1}{c|}{} & \multicolumn{4}
   {c|}{\textcolor{couleurA}{\code{A}} (younger $<65$)} & \multicolumn{4}
   {c|}{\textcolor{couleurB}{\code{B}} (old $>65$)} \\\cline{2-9}
    \multicolumn{1}{c|}{}& $\times 1.01$& GLM & GBM & RF & $\times 0.94$& GLM & GBM & RF \\\hline
        $m(\boldsymbol{x})=5\%$ & \textcolor{couleurA}{5.05\%}& \textcolor{couleurA}{5.17\%} & \textcolor{couleurA}{5.10\%} & \textcolor{couleurA}{5.27\%} & \textcolor{couleurB}{4.71\%} & \textcolor{couleurB}{3.84\%}& \textcolor{couleurB}{3.84\%}& \textcolor{couleurB}{3.96\%} \\
        $m(\boldsymbol{x})=10\%$ & \textcolor{couleurA}{10.09\%}& \textcolor{couleurA}{10.37\%} & \textcolor{couleurA}{10.16\%} & \textcolor{couleurA}{11.00\%}& \textcolor{couleurB}{9.42\%}  & \textcolor{couleurB}{7.81\%}& \textcolor{couleurB}{9.10\%}& \textcolor{couleurB}{6.88\%} \\
        $m(\boldsymbol{x})=20\%$ & \textcolor{couleurA}{20.19\%}& \textcolor{couleurA}{19.98\%} & \textcolor{couleurA}{19.65\%} & \textcolor{couleurA}{21.26\%} & \textcolor{couleurB}{18.85\%} & \textcolor{couleurB}{19.78\%}& \textcolor{couleurB}{23.79\%}& \textcolor{couleurB}{12.54\%}\\\hline
    \end{tabular}
    \caption{``Age-free" prediction (against old driver) if the initial prediction was 5\% (on top), 10\% (in the middle) and 20\% (below).}
    \label{tab:transport:old}
\end{table}

\begin{table}[!h]
    \centering
    \begin{tabular}{|r|cccc|cccc|}\cline{2-9}
   \multicolumn{1}{c|}{} & \multicolumn{4}
   {c|}{\textcolor{couleurA}{\code{A}} (young $<25$)} & \multicolumn{4}
   {c|}{\textcolor{couleurB}{\code{B}} (older $>25$)} \\\cline{2-9}
    \multicolumn{1}{c|}{}& $\times 0.74$& GLM & GBM & RF & $\times 1.06$& GLM & GBM & RF \\\hline
         $m(\boldsymbol{x})=5\%$ & \textcolor{couleurA}{3.71\%} & \textcolor{couleurA}{3.61\%} & \textcolor{couleurA}{4.45\%} & \textcolor{couleurA}{2.41\%} & \textcolor{couleurB}{5.29\%} & \textcolor{couleurB}{5.29\%}& \textcolor{couleurB}{5.14\%}& \textcolor{couleurB}{6.05\%} \\
        $m(\boldsymbol{x})=10\%$ & \textcolor{couleurA}{7.42\%} & \textcolor{couleurA}{7.89\%} & \textcolor{couleurA}{8.69\%} & \textcolor{couleurA}{5.17\%}& \textcolor{couleurB}{10.59\%} & \textcolor{couleurB}{10.29\%} & \textcolor{couleurB}{10.19\%}& \textcolor{couleurB}{11.95\%} \\
        $m(\boldsymbol{x})=20\%$ & \textcolor{couleurA}{14.84\%} & \textcolor{couleurA}{21.82\%} & \textcolor{couleurA}{18.09\%} & \textcolor{couleurA}{9.93\%} & \textcolor{couleurB}{21.17\%} & \textcolor{couleurB}{19.87\%}& \textcolor{couleurB}{20.33\%}& \textcolor{couleurB}{21.29\%}\\\hline
    \end{tabular}
    \caption{``Age-free" (against young drivers) prediction if the initial prediction was 5\% (on top), 10\% (in the middle) and 20\% (below).}
    \label{tab:transport:young}
\end{table}
In Figures  \ref{fig:distribution:bary:matching:old}and \ref{fig:distribution:bary:matching:young} we visualize the matchings between $m(\boldsymbol{x},s=\textcolor{couleurA}{\code{A}})$ and $m^\star(\boldsymbol{x},s=\textcolor{couleurA}{\code{A}})$, on top, and between $m(\boldsymbol{x},s=\textcolor{couleurB}{\code{B}})$ and $m^\star(\boldsymbol{x},s=\textcolor{couleurB}{\code{B}})$ below, respectively with $s=\boldsymbol{1}(\code{age}>65)$ (discrimination against old people) and $s=\boldsymbol{1}(\code{age}<30)$ (discrimination against young people).

\begin{figure}[!h]
    \centering
    \includegraphics[width=.32\textwidth]{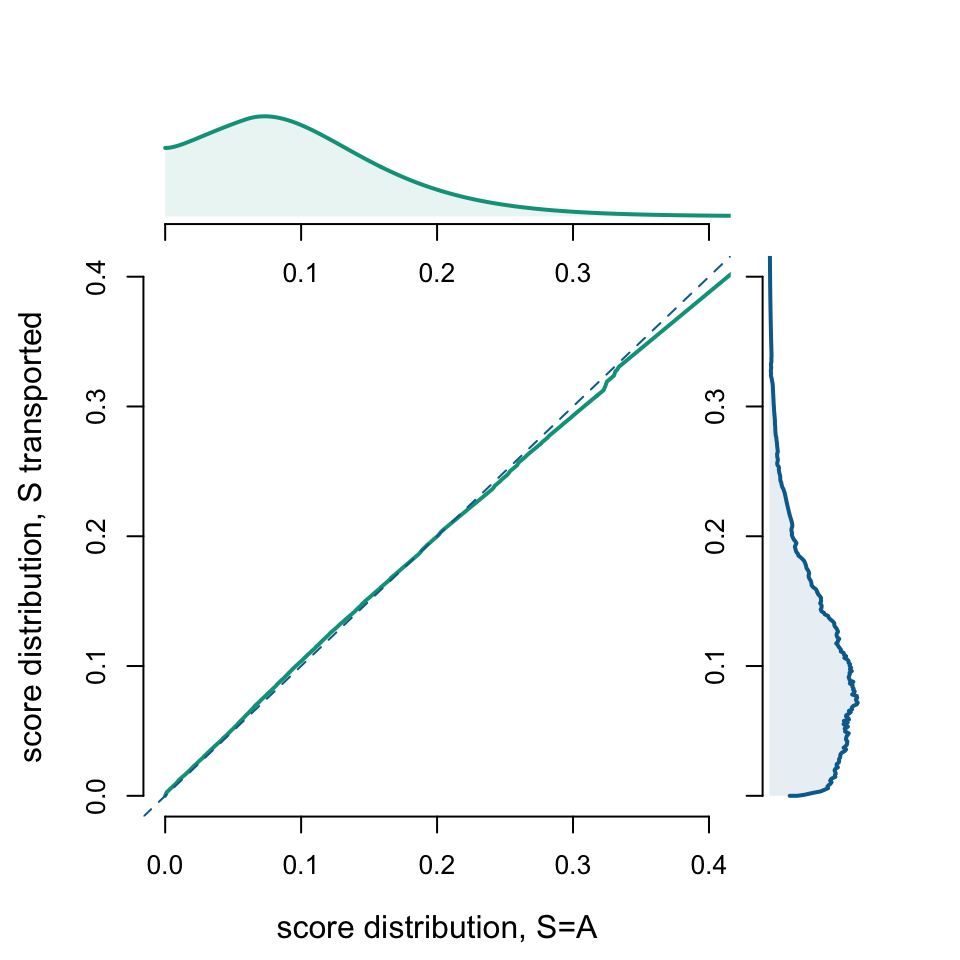} \includegraphics[width=.32\textwidth]{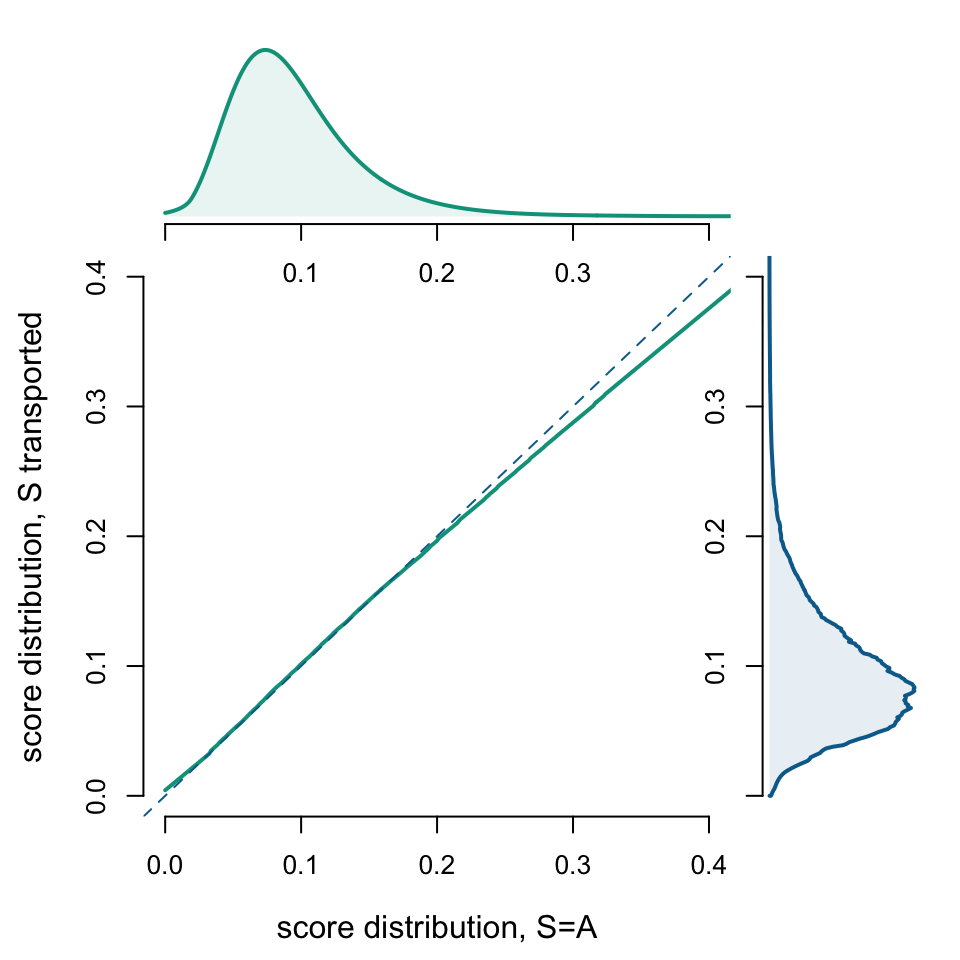} \includegraphics[width=.32\textwidth]{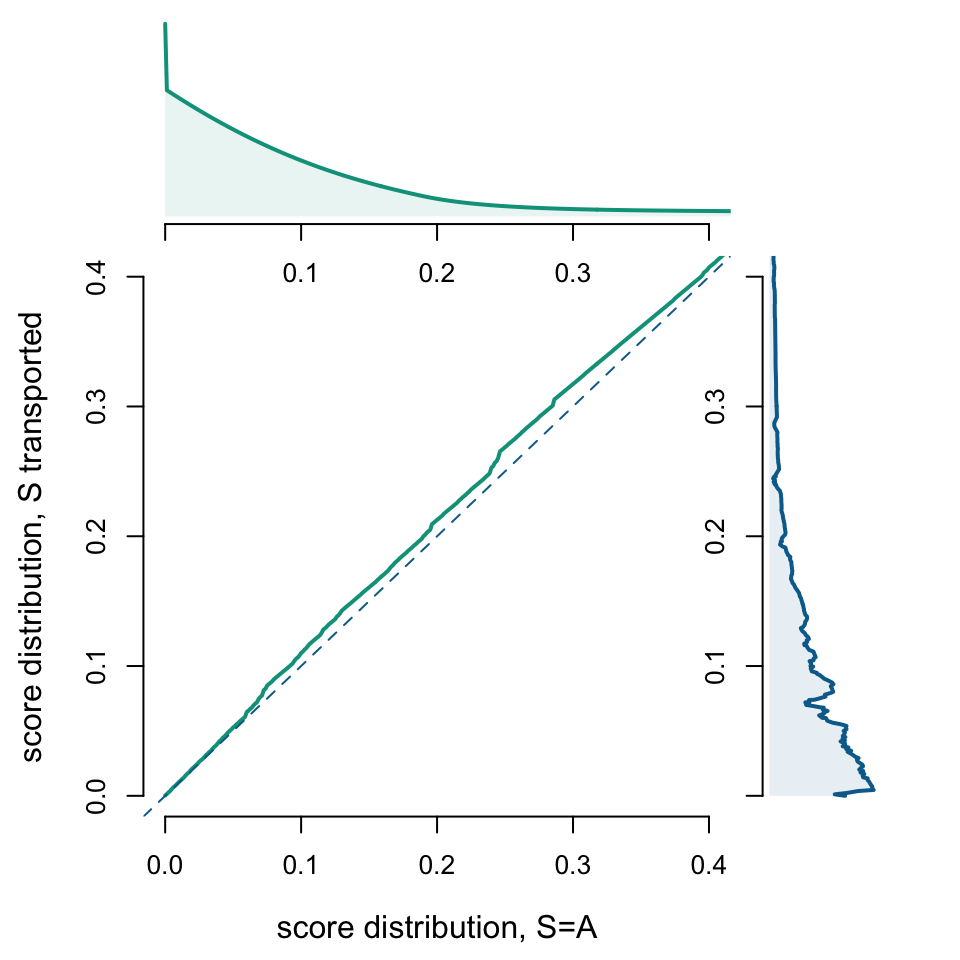}

    \includegraphics[width=.32\textwidth]{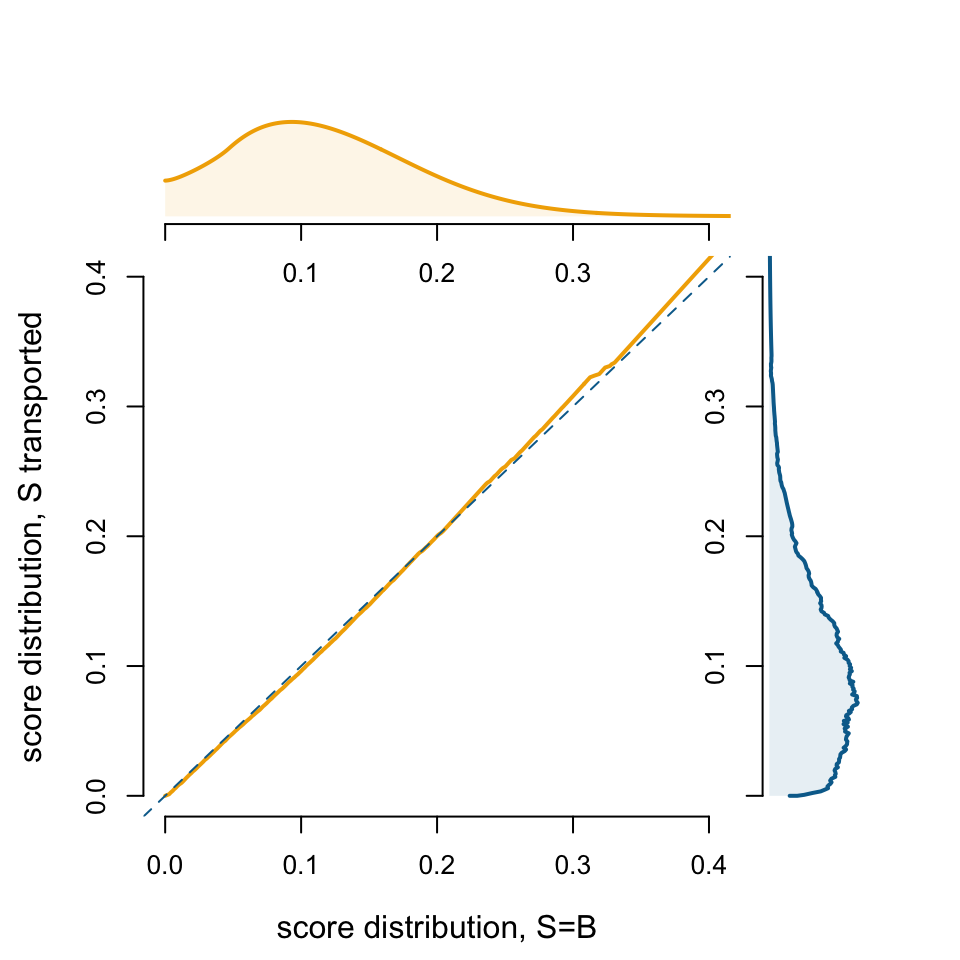} \includegraphics[width=.32\textwidth]{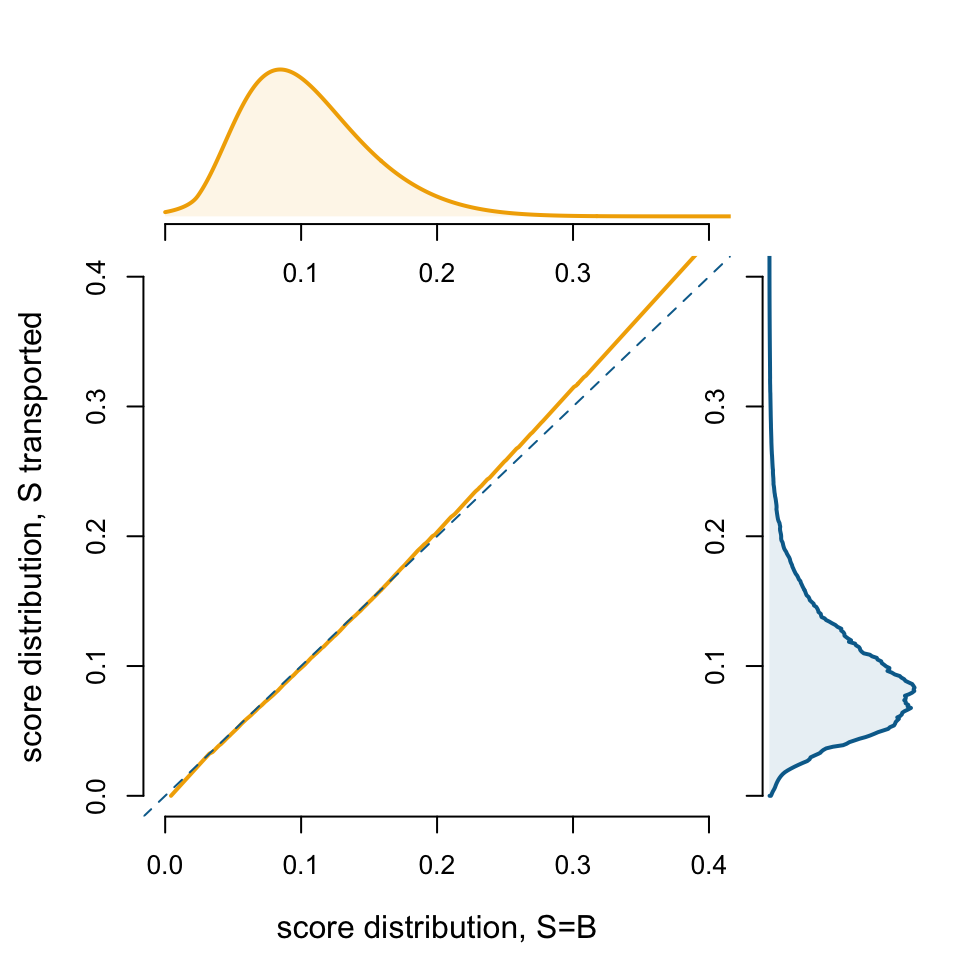} \includegraphics[width=.32\textwidth]{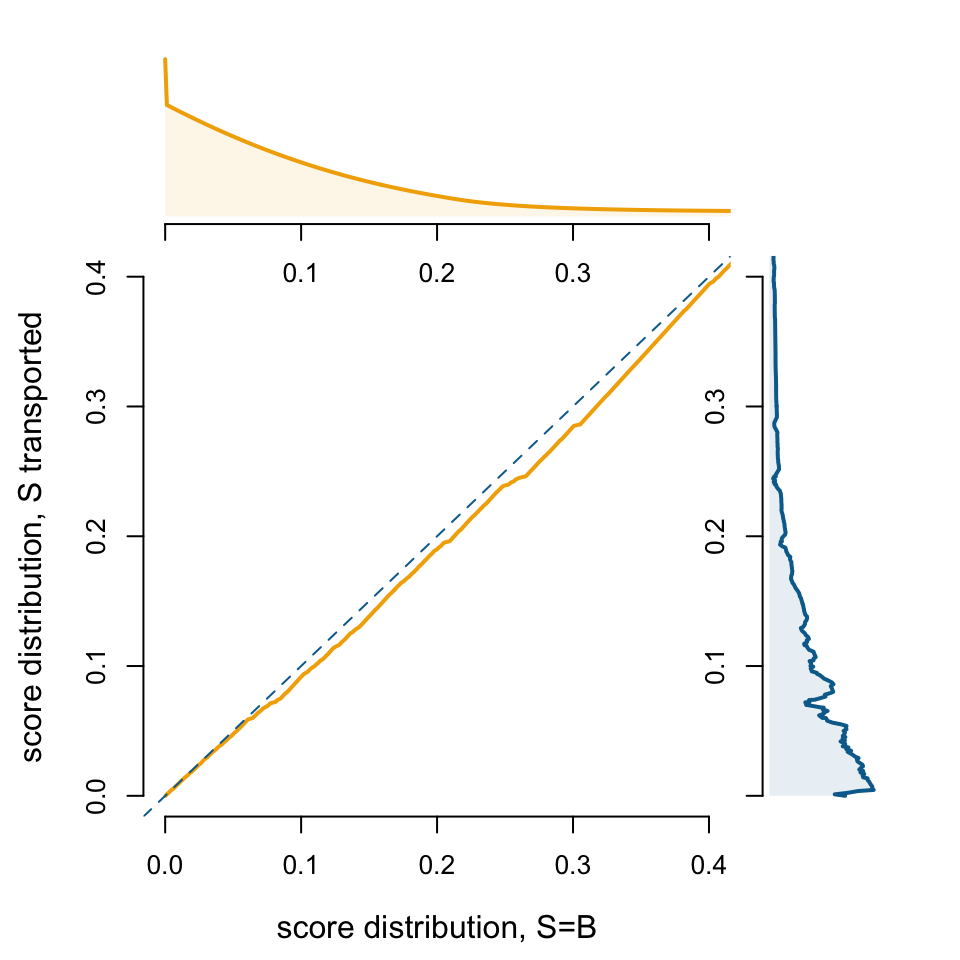}
    \caption{Matching between $m(\boldsymbol{x},s=\textcolor{couleurA}{\code{A}})$ and $m^\star(\boldsymbol{x},s=\textcolor{couleurA}{\code{A}})$, on top, and between $m(\boldsymbol{x},s=\textcolor{couleurB}{\code{B}})$ and $m^\star(\boldsymbol{x},s=\textcolor{couleurB}{\code{B}})$, below, on the probability to claim a loss in motor insurance when $s$ is the indicator that the driver is ``old" $\boldsymbol{1}(\code{age}>65)$.}
    \label{fig:distribution:bary:matching:old}
\end{figure}



\begin{figure}[!h]
    \centering
    \includegraphics[width=.32\textwidth]{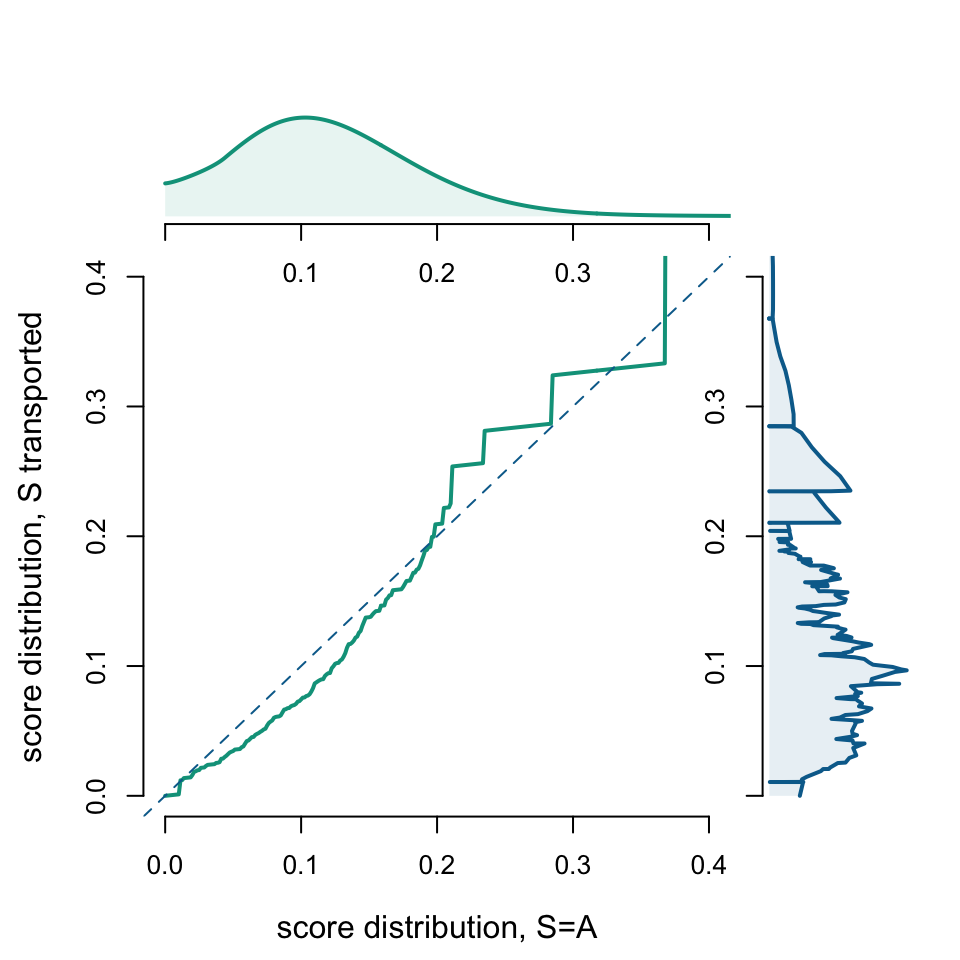} \includegraphics[width=.32\textwidth]{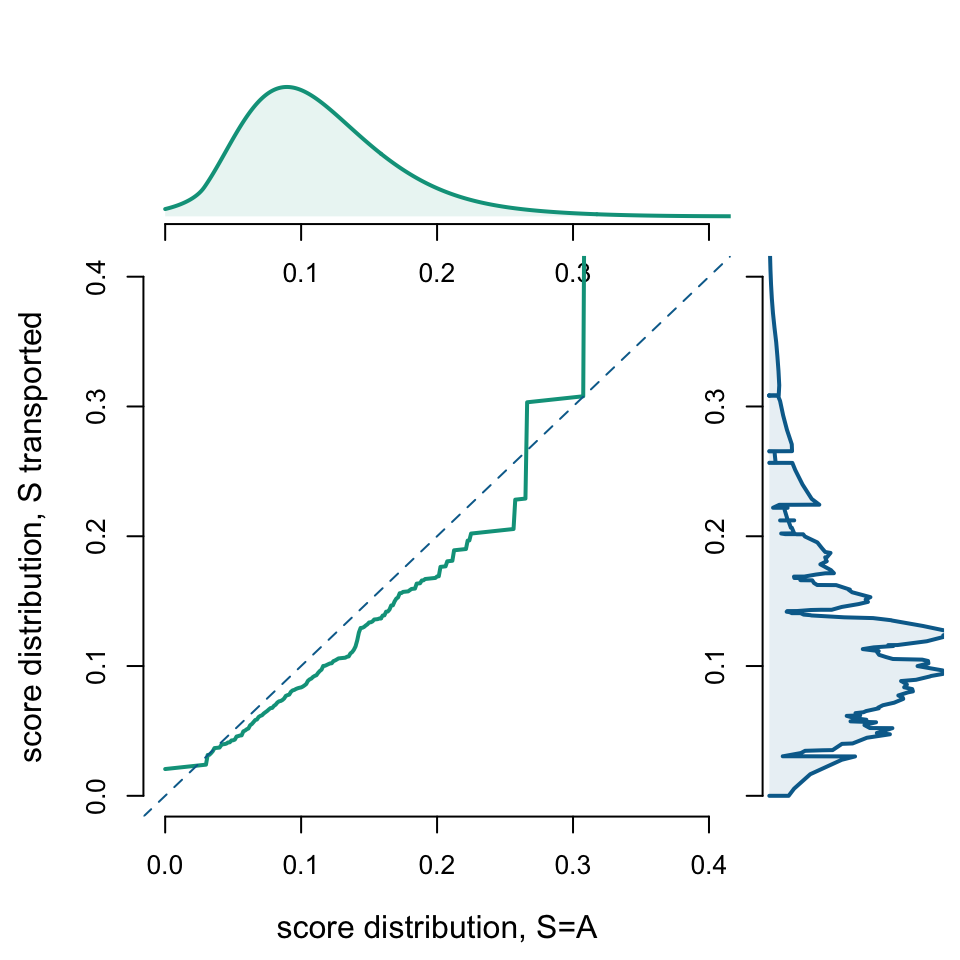} \includegraphics[width=.32\textwidth]{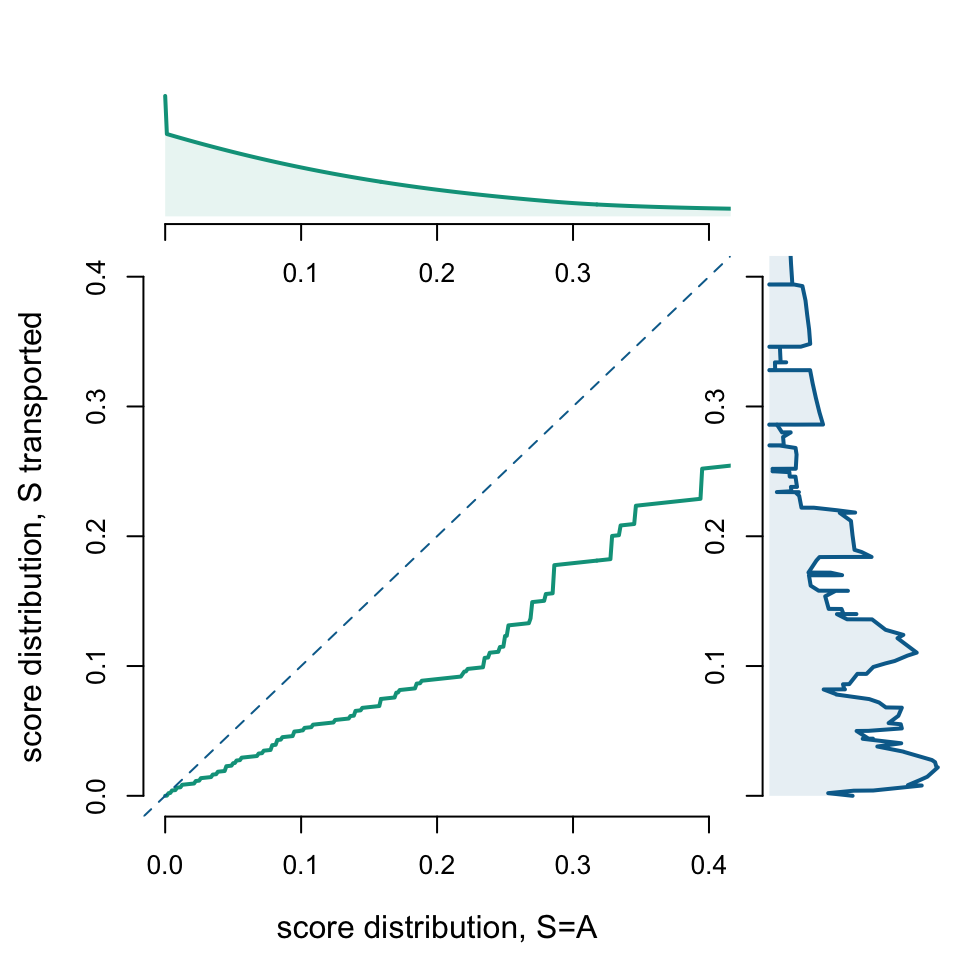}

    \includegraphics[width=.32\textwidth]{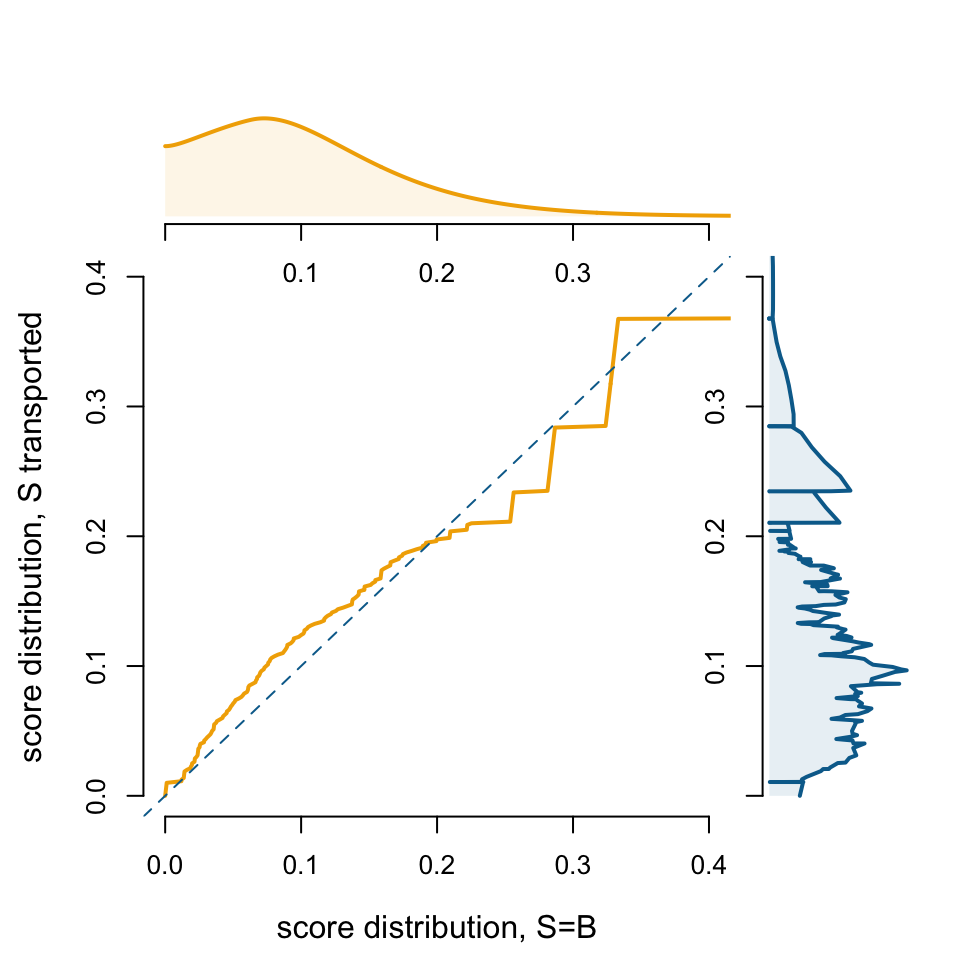} \includegraphics[width=.32\textwidth]{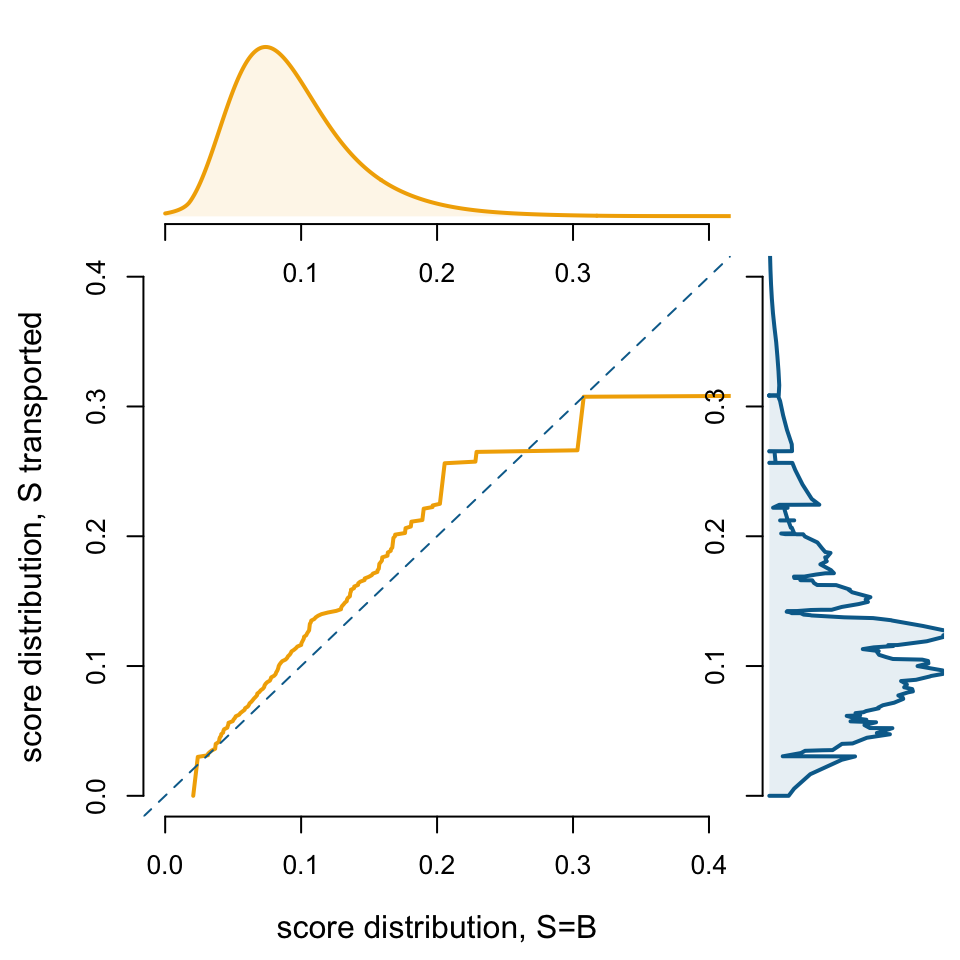} \includegraphics[width=.32\textwidth]{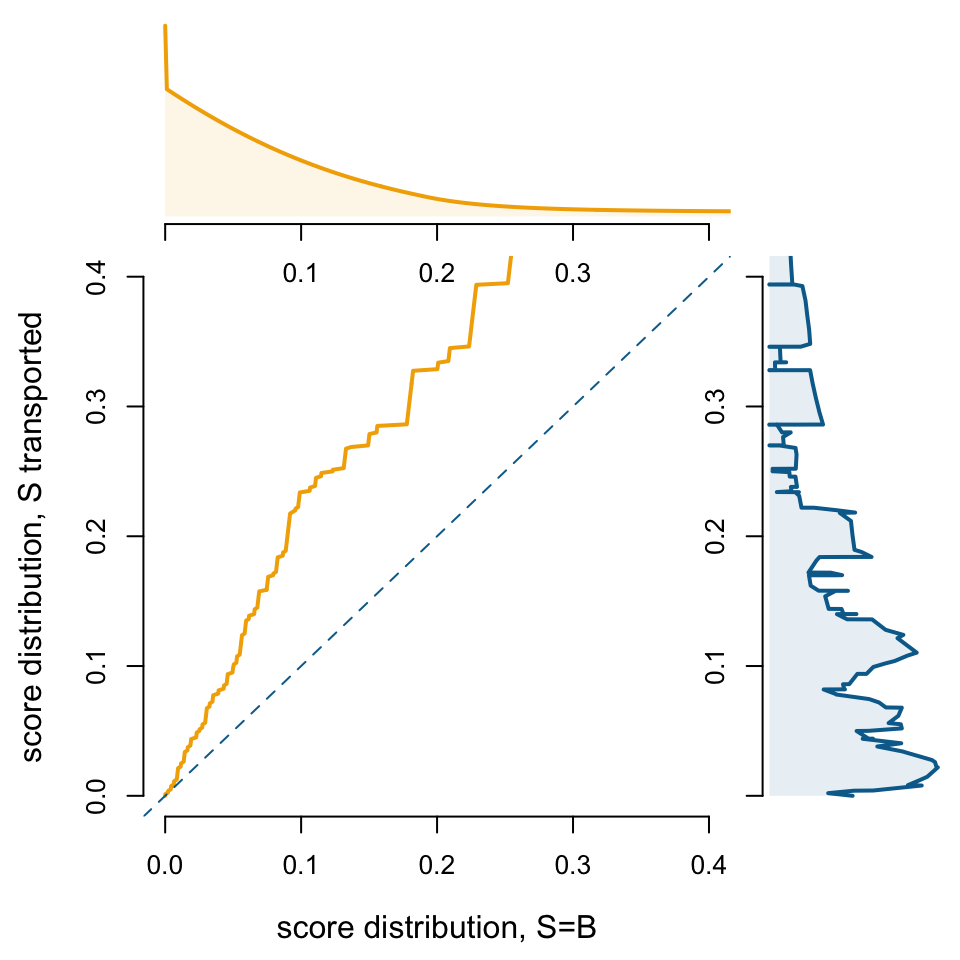}
    \caption{Matching between $m(\boldsymbol{x},s=\textcolor{couleurA}{\code{A}})$ and $m^\star(\boldsymbol{x},s=\textcolor{couleurA}{\code{A}})$, on top, and between $m(\boldsymbol{x},s=\textcolor{couleurB}{\code{B}})$ and $m^\star(\boldsymbol{x},s=\textcolor{couleurB}{\code{B}})$, below, on the probability to claim a loss in motor insurance when $s$  is the indicator that the driver is ``young" $\boldsymbol{1}(\code{age}<30)$.}
    \label{fig:distribution:bary:matching:young}
\end{figure}





\section{Conclusion}

We illustrated how discrimination naturally arises when models are used to predict risk based on a set of characteristics. Whereas some forms of discrimination can have legitimate reasons, they are often heavily correlated with sensitive attributes such as gender or race. Several notions of fairness and indeed several procedures to achieve fair predictions exist. We showed that the Wasserstein distance can be an effective tool to achieve fair predictions while employing the notion of optimal transport. This enables to take into account differences in the whole distribution of predictions across different groups instead of just shifting its mean, as a simple rescaling would.  The empirical results highlight the ease of the interpretation and value of the approach in promoting fair decision-making in the insurance industry. 

\bibliographystyle{plainnat}
\bibliography{biblio}

\begin{thebibliography}{52}
\providecommand{\natexlab}[1]{#1}
\providecommand{\url}[1]{\texttt{#1}}
\expandafter\ifx\csname urlstyle\endcsname\relax
  \providecommand{\doi}[1]{doi: #1}\else
  \providecommand{\doi}{doi: \begingroup \urlstyle{rm}\Url}\fi

\bibitem[Agueh and Carlier(2011)]{agueh2011barycenters}
Martial Agueh and Guillaume Carlier.
\newblock Barycenters in the wasserstein space.
\newblock \emph{SIAM Journal on Mathematical Analysis}, 43\penalty0
  (2):\penalty0 904--924, 2011.

\bibitem[Alvarez-Esteban et~al.(2018)Alvarez-Esteban, del Barrio,
  Cuesta-Albertos, and Matr{\'a}n]{alvarez2018wide}
Pedro~C Alvarez-Esteban, Eustasio del Barrio, Juan~A Cuesta-Albertos, and
  Carlos Matr{\'a}n.
\newblock Wide consensus aggregation in the wasserstein space. application to
  location-scatter families.
\newblock \emph{Bernoulli}, 24:\penalty0 3147--3179, 2018.

\bibitem[Amari(1982)]{amari1982differential}
Shun-Ichi Amari.
\newblock Differential geometry of curved exponential families-curvatures and
  information loss.
\newblock \emph{The Annals of Statistics}, 10\penalty0 (2):\penalty0 357--385,
  1982.

\bibitem[Avraham(2017)]{avraham2017discrimination}
Ronen Avraham.
\newblock Discrimination and insurance.
\newblock In Kasper Lippert-Rasmussen, editor, \emph{Handbook of the Ethics of
  Discrimination}, pages 335--347. Routledge, 2017.

\bibitem[Barocas et~al.(2017)Barocas, Hardt, and
  Narayanan]{barocas2017fairness}
Solon Barocas, Moritz Hardt, and Arvind Narayanan.
\newblock Fairness in machine learning.
\newblock \emph{Nips tutorial}, 1:\penalty0 2017, 2017.

\bibitem[Barry and Charpentier(2022)]{fairLB}
Laurence Barry and Arthur Charpentier.
\newblock {The Fairness of Machine Learning in Insurance: New Rags for an Old
  Man?}
\newblock \emph{ArXiv}, 2205.08112, 2022.

\bibitem[Brenier(1991)]{brenier1991polar}
Yann Brenier.
\newblock Polar factorization and monotone rearrangement of vector-valued
  functions.
\newblock \emph{Communications on pure and applied mathematics}, 44\penalty0
  (4):\penalty0 375--417, 1991.

\bibitem[Calders and Verwer(2010)]{calders2010three}
Toon Calders and Sicco Verwer.
\newblock Three naive bayes approaches for discrimination-free classification.
\newblock \emph{Data mining and knowledge discovery}, 21\penalty0 (2):\penalty0
  277--292, 2010.

\bibitem[Caton and Haas(2020)]{caton2020fairness}
Simon Caton and Christian Haas.
\newblock Fairness in machine learning: A survey.
\newblock \emph{arXiv}, 2010.04053, 2020.

\bibitem[Charpentier(2014)]{charpentierCAS}
Arthur Charpentier.
\newblock \emph{Computational Actuarial Science}.
\newblock CRC Press, 2014.

\bibitem[Charpentier et~al.(2023)Charpentier, Flachaire, and
  Gallic]{charpentier2023transport}
Arthur Charpentier, Emmanuel Flachaire, and Ewen Gallic.
\newblock Causal inference with optimal transport.
\newblock In Nguyen~Ngoc Thach, Vladik Kreinovich, Doan~Thanh Ha, and
  Nguyen~Duc Trung, editors, \emph{Optimal Transport Statistics for Economics
  and Related Topics}. Springer Verlag, 2023.

\bibitem[Chaufton(1886)]{chaufton1886assurances}
Albert Chaufton.
\newblock \emph{Les assurances, leur pass{\'e}, leur pr{\'e}sent, leur avenir,
  au point de vue rationnel, technique et pratique, moral, {\'e}conomique et
  social, financier et administratif, l{\'e}gal, l{\'e}gislatif et contractuel,
  en France et {\`a} l'{\'e}tranger}.
\newblock Chevalier-Marescq, 1886.

\bibitem[Chzhen et~al.(2020)Chzhen, Denis, Hebiri, Oneto, and
  Pontil]{chzhen2020fair}
Evgenii Chzhen, Christophe Denis, Mohamed Hebiri, Luca Oneto, and Massimiliano
  Pontil.
\newblock Fair regression with wasserstein barycenters.
\newblock \emph{Advances in Neural Information Processing Systems},
  33:\penalty0 7321--7331, 2020.

\bibitem[{Council of the European Union}(2004)]{genderdirective}
{Council of the European Union}.
\newblock Council directive 2004/113/ec of 13 december 2004 implementing the
  principle of equal treatment between men and women in the access to and
  supply of goods and services.
\newblock \emph{Official Journal of the European Union}, \penalty0
  (373):\penalty0 37--43, 2004.

\bibitem[Crossney(2016)]{Crossney2016}
Kristen~B. Crossney.
\newblock Redlining.
\newblock \emph{https://philadelphiaencyclopedia.org/essays/redlining/}, 2016.

\bibitem[Galichon(2016)]{galichon2016optimal}
Alfred Galichon.
\newblock \emph{Optimal transport methods in economics}.
\newblock Princeton University Press, 2016.

\bibitem[Goldman(1979)]{goldman1979justice}
Alan Goldman.
\newblock \emph{Justice and Reverse Discrimination}.
\newblock 1979.

\bibitem[Gouic et~al.(2020)Gouic, Loubes, and Rigollet]{gouic2020projection}
Thibaut~Le Gouic, Jean-Michel Loubes, and Philippe Rigollet.
\newblock Projection to fairness in statistical learning.
\newblock \emph{arXiv}, 2005.11720, 2020.

\bibitem[Hellman(2011)]{hellman2008discrimination}
Deborah Hellman.
\newblock \emph{When is discrimination wrong?}
\newblock Harvard University Press, 2011.

\bibitem[Higham(2008)]{higham2008functions}
Nicholas~J Higham.
\newblock \emph{Functions of matrices: theory and computation}.
\newblock SIAM, 2008.

\bibitem[Hu et~al.(2023)Hu, Ratz, and Charpentier]{charpentier2023ecml}
Francois Hu, Philipp Ratz, and Arthur Charpentier.
\newblock Fairness in multi-task learning via wasserstein barycenters.
\newblock \emph{Joint European Conference on Machine Learning and Knowledge
  Discovery in Databases -- ECML PKDD}, 2023.

\bibitem[{Insurance Bureau of Canada}(2021)]{ibc2021}
{Insurance Bureau of Canada}.
\newblock Facts of the property and casualty insurance industry in canada.
\newblock \emph{Insurance Bureau of Canada}, 2021.

\bibitem[Jeffreys(1946)]{jeffreys1946invariant}
Harold Jeffreys.
\newblock An invariant form for the prior probability in estimation problems.
\newblock \emph{Proceedings of the Royal Society of London. Series A.
  Mathematical and Physical Sciences}, 186\penalty0 (1007):\penalty0 453--461,
  1946.

\bibitem[Jiang et~al.(2020)Jiang, Pacchiano, Stepleton, Jiang, and
  Chiappa]{jiang2020wasserstein}
Ray Jiang, Aldo Pacchiano, Tom Stepleton, Heinrich Jiang, and Silvia Chiappa.
\newblock Wasserstein fair classification.
\newblock In \emph{Uncertainty in artificial intelligence}, pages 862--872.
  PMLR, 2020.

\bibitem[Jordan(1881)]{camille1881serie}
Camille Jordan.
\newblock Sur la serie de fourier.
\newblock \emph{Camptes Rendus Hebdomadaires de l’Academie des Sciences},
  92:\penalty0 228--230, 1881.

\bibitem[Kearns and Roth(2019)]{kearns2019ethical}
Michael Kearns and Aaron Roth.
\newblock \emph{The ethical algorithm: The science of socially aware algorithm
  design}.
\newblock Oxford University Press, 2019.

\bibitem[Knowlton(1978)]{knowlton1978regents}
Robert~E Knowlton.
\newblock Regents of the university of california v. bakke.
\newblock \emph{Arkansas Law Review}, 32:\penalty0 499, 1978.

\bibitem[Kullback and Leibler(1951)]{kullback1951information}
Solomon Kullback and Richard~A Leibler.
\newblock On information and sufficiency.
\newblock \emph{The Annals of Mathematical Statistics}, 22\penalty0
  (1):\penalty0 79--86, 1951.

\bibitem[Lin(1991)]{lin1991divergence}
Jianhua Lin.
\newblock Divergence measures based on the shannon entropy.
\newblock \emph{IEEE Transactions on Information theory}, 37\penalty0
  (1):\penalty0 145--151, 1991.

\bibitem[Lippert-Rasmussen(2020)]{lippert2020making}
Kasper Lippert-Rasmussen.
\newblock \emph{Making sense of affirmative action}.
\newblock Oxford University Press, 2020.

\bibitem[Macnicol(2006)]{macnicol2006age}
John Macnicol.
\newblock \emph{Age discrimination: An historical and contemporary analysis}.
\newblock Cambridge University Press, 2006.

\bibitem[Mallasto and Feragen(2017)]{mallasto2017learning}
Anton Mallasto and Aasa Feragen.
\newblock Learning from uncertain curves: The 2-wasserstein metric for gaussian
  processes.
\newblock \emph{Advances in Neural Information Processing Systems}, 30, 2017.

\bibitem[Mallows(1972)]{mallows1972note}
Colin~L Mallows.
\newblock A note on asymptotic joint normality.
\newblock \emph{The Annals of Mathematical Statistics}, pages 508--515, 1972.

\bibitem[Nielsen(2013)]{nielsen2013jeffreys}
Frank Nielsen.
\newblock Jeffreys centroids: A closed-form expression for positive histograms
  and a guaranteed tight approximation for frequency histograms.
\newblock \emph{IEEE Signal Processing Letters}, 20\penalty0 (7):\penalty0
  657--660, 2013.

\bibitem[Nielsen and Boltz(2011)]{nielsen2011burbea}
Frank Nielsen and Sylvain Boltz.
\newblock The burbea-rao and bhattacharyya centroids.
\newblock \emph{IEEE Transactions on Information Theory}, 57\penalty0
  (8):\penalty0 5455--5466, 2011.

\bibitem[Nielsen and Nock(2009)]{nielsen2009sided}
Frank Nielsen and Richard Nock.
\newblock Sided and symmetrized bregman centroids.
\newblock \emph{IEEE transactions on Information Theory}, 55\penalty0
  (6):\penalty0 2882--2904, 2009.

\bibitem[Petersen and M{\"u}ller(2019)]{petersen2019frechet}
Alexander Petersen and Hans-Georg M{\"u}ller.
\newblock {Fréchet regression for random objects with Euclidean predictors}.
\newblock \emph{The Annals of Statistics}, 47\penalty0 (2), 2019.

\bibitem[Pojman(1998)]{pojman1998case}
Louis~P Pojman.
\newblock The case against affirmative action.
\newblock \emph{International Journal of Applied Philosophy}, 12\penalty0
  (1):\penalty0 97--115, 1998.

\bibitem[Powell(2020)]{powell2020risk}
Lars Powell.
\newblock Risk-based pricing of proper ty and liability insurance.
\newblock \emph{Journal of Insurance Regulation}, 1, 2020.

\bibitem[Rhynhart(2020)]{Rhynhart2020}
Rebecca Rhynhart.
\newblock Mapping the legacy of structural racism in philadelphia.
\newblock \emph{Philadelphia, Office pf the Controller}, 2020.

\bibitem[Rolison et~al.(2018)Rolison, Regev, Moutari, and
  Feeney]{rolison2018factors}
Jonathan~J Rolison, Shirley Regev, Salissou Moutari, and Aidan Feeney.
\newblock What are the factors that contribute to road accidents? an assessment
  of law enforcement views, ordinary drivers’ opinions, and road accident
  records.
\newblock \emph{Accident Analysis \& Prevention}, 115:\penalty0 11--24, 2018.

\bibitem[Rudin(1966)]{rudin1966real}
Walter Rudin.
\newblock \emph{Real and Complex Analysis}.
\newblock McGraw-hill New York, 1966.

\bibitem[Sabbagh(2007)]{sabbagh2007equality}
Daniel Sabbagh.
\newblock \emph{Equality and transparency: A strategic perspective on
  affirmative action in American law}.
\newblock Springer, 2007.

\bibitem[Santambrogio(2015)]{santambrogio2015optimal}
Filippo Santambrogio.
\newblock Optimal transport for applied mathematicians.
\newblock \emph{Birk{\"a}user, NY}, 55\penalty0 (58-63):\penalty0 94, 2015.

\bibitem[{The Zebra}(2022)]{zebra}
{The Zebra}.
\newblock Car insurance rating factors by state.
\newblock \emph{https://www.thezebra.com/}, 2022.

\bibitem[Thiery and Van~Schoubroeck(2006)]{thiery2006fairness}
Yves Thiery and Caroline Van~Schoubroeck.
\newblock Fairness and equality in insurance classification.
\newblock \emph{The Geneva Papers on Risk and Insurance-Issues and Practice},
  31\penalty0 (2):\penalty0 190--211, 2006.

\bibitem[Turner(2015)]{turner2015way}
Ronald Turner.
\newblock The way to stop discrimination on the basis of race.
\newblock \emph{Stanford Journal of Civil Rights \& Civil Liberties},
  11:\penalty0 45, 2015.

\bibitem[Villani(2003)]{villani2003optimal}
C{\'e}dric Villani.
\newblock \emph{Topics in optimal transportation}, volume~58.
\newblock American Mathematical Society, 2003.

\bibitem[Villani(2009)]{villani2009optimal}
C{\'e}dric Villani.
\newblock \emph{Optimal transport: old and new}, volume 338.
\newblock Springer, 2009.

\bibitem[Wasserstein(1969)]{vaserstein1969markov}
Leonid~Nisonovich Wasserstein.
\newblock Markov processes over denumerable products of spaces, describing
  large systems of automata.
\newblock \emph{Problemy Peredachi Informatsii}, 5\penalty0 (3):\penalty0
  64--72, 1969.

\bibitem[Zafar et~al.(2015)Zafar, Valera, Rodriguez, and
  Gummadi]{zafar2015fairness}
Muhammad~Bilal Zafar, Isabel Valera, Manuel~Gomez Rodriguez, and Krishna~P
  Gummadi.
\newblock Fairness constraints: Mechanisms for fair classification.
\newblock \emph{arXiv}, 1507.05259, 2015.

\bibitem[Zhang and Bareinboim(2018)]{zhang2018fairness}
Junzhe Zhang and Elias Bareinboim.
\newblock Fairness in decision-making—the causal explanation formula.
\newblock In \emph{Thirty-Second AAAI Conference on Artificial Intelligence},
  2018.

\end{thebibliography}

\end{document}